\documentclass[10pt,conference]{IEEEtran}
\IEEEoverridecommandlockouts
\usepackage{algorithm}
\usepackage[fleqn]{amsmath}
\usepackage{amssymb,amsfonts}
\usepackage{cite}
\usepackage{comment}
\usepackage{framed}
\usepackage{graphicx}
\usepackage{mathalpha}
\usepackage{multirow}
\usepackage{textcomp}
\usepackage{xcolor}
\usepackage{subcaption}
\usepackage[T1]{fontenc}

\DeclareMathOperator*{\sdef}{\stackrel{\mathsf{def}}{=}}
\DeclareMathOperator*{\smax}{\mathsf{max}}

\definecolor{myblue}{RGB}{0, 0, 220}
\usepackage[colorlinks=true]{hyperref}
\hypersetup{
    linkcolor=myblue,          % Color of internal links (sections, figures, etc.)
    citecolor=myblue,        % Color of citation links
    urlcolor=myblue,          % Color of URL links
    filecolor=myblue       % Color of file links
}

\usepackage{amsthm}
\newtheoremstyle{ieeebold}  % Name of the style
  {\topsep}                  % Space above
  {\topsep}                  % Space below
  {\normalfont}              % Body font (normal, not italic)
  {0pt}                      % Indent amount (0pt = noindent)
  {\bfseries}                % Theorem head font (bold)
  {.}                        % Punctuation after head
  {5pt plus 1pt minus 1pt}   % Space after head
  {}                         % Custom head spec
\theoremstyle{ieeebold}

\newtheorem{lemma}{Lemma}

\newtheorem{proposition}{Proposition}
\newtheorem{theorem}{Theorem}

\newcounter{mycounter}
\newcommand{\myitem}{%
    \refstepcounter{mycounter}%
    (par\arabic{mycounter})%
}

\newcounter{challengecounter}
\newcounter{solutioncounter}
\newcounter{algcpmccounter}
\newcounter{ctr}
\DeclareRobustCommand{\stp}[1]{%
   \refstepcounter{ctr}%
   \thectr\label{#1}.}

\graphicspath{{figures/}}

\begin{document}

\title{Toward AI Trustworthiness: Finding Analytically Proven Forward-Invariant Sets for AI-Controlled Systems}

\author{
Haoyang Song$^{1}$,
Qixin Wang$^{2}$,
XikunYang$^{2}$,\\
Department of Computing, The Hong Kong Polytechnic University, Hong Kong SAR, China. \\
Email: haoyang.song@connect.polyu.hk \\
csqwang@polyu.edu.hk \\
frank-xikun.yang@connect.polyu.hk \\
% % anonymous authors
% \IEEEcompsocitemizethanks{
% \IEEEcompsocthanksitem
% 1. These authors contributed equally to this work.
% \IEEEcompsocthanksitem
% 2. This is the corresponding author.
% }
}

\maketitle

\begin{abstract}
The rise of Neural Network (NN) based AI fosters increasing demands to apply AI in control applications, particularly, to generate AI-controllers for nonlinear control applications, where traditional control theories often fail to work well. However, such AI-controlled systems are well-known to be highly nonlinear, and hard to explain/verify, leading to trustworthiness concerns. On the other hand, many AI-controlled systems are safety/mission critical, it often needs to be certifiably trustworthy. But due to the the aforementioned trustworthiness concerns, certification legal/regulatory authorities are deterred from approving the AI-controlled systems. 
A key solution to address the trustworthiness concerns is to find a Forward-Invariant Set (FIS) for the AI-controlled systems. A FIS is a subset of the state space of the controlled target, so that starting from any state inside of the FIS, the future state trajectory remains inside the FIS. If the FIS does not intersect with any concerned unsafe state set, the legal/regulatory authorities can be assured of the trustworthiness of the AI-controlled system. 
However, finding the FIS for a given (e.g. by the control system vendor to the authority for certification) AI-controlled system is a well-known hard problem. In this paper, we propose a solution to find analytically proven FISs for given AI-controlled systems. We propose to use an Invertible NN (INN) to transform the original state space into a latent state space, to reduce the high nonlinearity of the original AI-controlled system, so as to increase the likelihood of a regular-shaped FIS. We further propose a procedure to stimulate the INN to converge to a preferred hyper-rectangular FIS candidate and conduct verification. We proved that, when this procedure returns success, the hyper-rectangular FIS candidate (and its INN inversely transformed counterpart in the original state space) is an analytically proven FIS. 
We conducted extensive trials on representative AI-controlled system testbeds. Our solution succeeded in finding FISs for all the testbeds. Meanwhile, to our best knowledge, there is no other work in the literature that addresses exactly the same problem as ours. The best we can compare with is a baseline solution adapted from a state-of-the-art solution for a related (but not the same) problem. The baseline solution failed to find any FIS for our testbeds. Furthermore, for 40 of the 45 testbeds, our solution's time costs are significantly smaller than those of the baseline solution's. The centers of our found FISs also roughly meet the preferences set by domain experts.
\end{abstract}

%\begin{IEEEkeywords}
%AI, Trustworthiness.
%\end{IEEEkeywords}

%=============================
\section{Introduction}
\label{sect:Introduction}

The rise of \emph{Neural Network (NN) based AI} (simplified as ``\emph{AI}'' in the following)~\cite{lecun2015deep}\cite{mnih2015human}\cite{sutton1998reinforcement} fosters increasing demands to apply AI in control applications. Such demands are particularly valid for nonlinear control applications, where traditional control theories often fail to work well~\cite{khalil_nonlinear_2002}\cite{slotine1991applied}\cite{recht2019tour}\cite{kober2013reinforcement}. $\hfill$\myitem\label{myitem:ai_is_nn_ai}

For such control applications, AI does make significant progresses: various novel \emph{AI generated controllers} (simplified as ``\emph{AI-controllers}'' in the following) are reported~\cite{kober2013reinforcement}~\cite{hwangbo2019learning}\cite{lillicrap2019continuouscontroldeepreinforcement}\cite{radosavovic2024real}. However, adoption of AI-controllers is still hampered by their trustworthiness. 

On one hand, AI generated models, including AI-controllers, are well-known to be highly nonlinear, and hard to explain/verify. Particularly, the difficulty often worsens for those nonlinear control applications, where traditional control theories cannot work well (hence need AI-controllers). 
On the other hand, many nonlinear control applications are safety or mission critical, such as aviation, smart driving, surgical robots etc. Such applications often demand \emph{certifiably} trustworthy controllers due to legal or regulatory requirements~\cite{knight2002safety}\cite{ames2016control}\cite{paden2016survey}. 
If an AI-controller cannot be explained/verified, authorities would not dare to issue the necessary legal/regulatory certificates to approve the adoption of the AI-controller. In other words, the authorities cannot trust such AI-controllers. \phantom{x} $\hfill$\myitem\label{myitem:ai_controlled_system_is_highly_nonlinear}

To address the above trustworthiness problem, an AI-controlled system should better guarantee proven safety.

To guarantee proven safety, one approach is to find a so-called \emph{Forward-Invariant Set} (FIS) for the AI-controlled system, where the FIS does not intersect with the \emph{set of concerned unsafe states}. These concepts are explained as follows.

First, any control application has its control target, aka \emph{plant}. For example, a robotic control system's plant is the physical robot. The \emph{state of the plant} (simplified as ``\emph{state}'' in the following) at time instance $t$ can be denoted as a vector $\vec{x}(t) \in \mathbb{R}^{n}$, where $\mathbb{R}^{n}$ is called the $n$-dimensional \emph{state space}, and each dimension of $\vec{x}(t)$ is also called a \emph{component} of the state.  $\hfill$\myitem\label{myitem:state}

Second, a FIS $\mathcal{I}$ is a subset of $\mathbb{R}^{n}$, where given any \emph{initial time instance} $t_{0}$, then for any \emph{initial state} $\vec{x}(t_0) \in \mathcal{I}$, the future \emph{state trajectory} $\{\vec{x}(t)\}_{t \in [t_{0}, +\infty)}$ always remains in $\mathcal{I}$.  $\hfill$\myitem\label{myitem:definition_fis}

Third, the set $\mathcal{F}$ of concerned unsafe states is another subset of $\mathbb{R}^{n}$, whose elements (aka \emph{unsafe states}) are states considered unsafe by the legal/regulatory authority. Often, we also call $\mathcal{F}$ the \emph{forbidden set}; and call  $\bar{\mathcal{F}} \sdef (\mathbb{R}^{n} - \mathcal{F})$ the \emph{safe set}. $\hfill$\myitem\label{myitem:definition_set_of_concerned_unsafe_states}

If a FIS $\mathcal{I}$ is found, and $\mathcal{I} \cap \mathcal{F} = \varnothing$ is proven, then the AI-controlled system guarantees proven safety. Specifically, any state trajectory $\{\vec{x}(t)\}_{t \in [t_{0}, +\infty)}$ starting from an initial state $\vec{x}(t_{0}) \in \mathcal{I}$ is guaranteed to be safe: as $\{\vec{x}(t)\}_{t \in [t_{0}, +\infty)} \subseteq \mathcal{I}$ (due to the definition of FIS, see (par\ref{myitem:definition_fis})) implies $\{\vec{x}(t)\}_{t \in [t_{0}, +\infty)} \cap \mathcal{F} = \varnothing$ (due to $\mathcal{I} \cap \mathcal{F} = \varnothing$). The legal/regulatory authority can thus trust the AI-controlled system, given the initial state $\vec{x}(t_{0}) \in \mathcal{I}$.  $\hfill$\myitem\label{myitem:how_to_let_authority_trust}

However, how to find the FIS $\mathcal{I}$ is a well-known difficult problem~\cite{yang2024icml}\cite{Sidrane2022}, especially when the dynamics of the system involve high nonlinearity, which is a feature of AI-controlled systems (see (par\ref{myitem:ai_controlled_system_is_highly_nonlinear})). To our best knowledge, the state-of-the-art solution is by Yang et al.~\cite{yang2024icml}. Specifically, Yang et al.~\cite{yang2024icml} propose to jointly train the AI-controller and a Lyapunov function\footnote{Lyapunov function intuitively is a pseudo potential energy defined upon the states, with certain properties, such as always decreasing along any state trajectories. Traditional control theories often turn a controller design problem into a Lyapunov function construction problem~\cite{khalil_nonlinear_2002}. If the controller can be adjusted, so that a Lyapunov function can be constructed for the controlled system for a given initial state set, then the given initial state set is a FIS; in other words, the controller has a FIS, hence is stable. However, when the system is highly nonlinear, there is no generic solution to adjust the controller and construct the Lyapunov function.}, so that if the training succeeds, the outputted Lyapunov function also implies a FIS. However, in this method, the AI-controller is supposed to be jointly adjusted with the Lyapunov function (and the implied FIS) throughout the training. The method is not intended for problems where the AI-controller is given a priori (e.g., provided by a control
system vendor to the legal/regulatory authority for certification). Such problems, however, are the focus of this paper.  

In summary, \emph{the focus of this paper} is to find a FIS for an AI-controlled system $S$, where the AI-controller is given a priori. We propose the following solution heuristics. $\hfill$\myitem\label{myitem:focus_of_this_paper}

First, we notice a state-of-the-art verification tool, $\alpha,\beta$-CROWN~\cite{zhang2018nips}\cite{wang2021betanips}\cite{xu2020fast}\cite{xu2020nips}, which can \emph{analytically prove} if a given \emph{polytope} (see \eqref{eqn:polytope}) $\mathcal{B}$ actually bounds an NN's outputs on a given \emph{hyper-rectangular} (see \eqref{eqn:hyper-rectangle}) input domain of concern. Consider $S$ holistically as an NN, whose input is the original state $\vec{x}(t)$, and the output is the next state $\vec{x}(t+\delta)$ (where $\delta$ is the sensing/actuation period, see Section~\ref{sect:ProblemFormulation} for more discussions on discretization), i.e. both the input domain and output range are subsets of the $n$-dimensional state space. Then $S$ can be analyzed by $\alpha,\beta$-CROWN. 
Specifically, given a hyper-rectangular input domain of concern $\mathcal{X} \subseteq \mathbb{R}^{n}$, denote the corresponding output range of $S$ as $S(\mathcal{X}) \subseteq \mathbb{R}^{n}$. Let $\mathcal{B} = \mathcal{X}$ (note hyper-rectangle is polytope). Then if $\alpha,\beta$-CROWN can analytically prove that $S(\mathcal{X}) \subseteq \mathcal{B} = \mathcal{X}$, then $\mathcal{X}$ is intuitively a FIS. That is, $\alpha,\beta$-CROWN can help us analytically prove whether a hyper-rectangular FIS candidate $\mathcal{X}$ is actually a FIS for $S$.  

However, $\alpha,\beta$-CROWN requires $\mathcal{X}$ to be hyper-rectangular. The chance that a highly nonlinear system $S$'s FIS has a hyper-rectangular shape is low. To cope with this, we propose to construct a transformation function $\mathcal{T}$ and its inverse $\mathcal{T}^{-1}$, to transform the \emph{original} $n$-dimensional state space $\mathbb{X}_{0}$ to another (aka the \emph{latent}) $n$-dimensional state space  $\mathbb{X}_{1}$, so that $S$'s nonlinear behaviors in $\mathbb{X}_{0}$ become more \emph{regular} (to be formally defined in (par\ref{myitem:definition_regular})) in $\mathbb{X}_{1}$. We propose to use the tool of \emph{Invertible NN} (INN)~\cite{Laurent2017iclr} to find such $\mathcal{T}$ and $\mathcal{T}^{-1}$. $\hfill$\myitem\label{myitem:heuristics_use_inn}

Specifically, in $\mathbb{X}_{1}$, we shall specify an intended hyper-rectangle $\mathcal{I}_{1}$ to be the \emph{FIS candidate}. We train the INN of $\mathcal{T}$ and $\mathcal{T}^{-1}$, so that $\mathcal{I}_{1}$ looks like a FIS for the transformed $S$ in $\mathbb{X}_{1}$. Once the training succeeds, we shall exploit $\alpha,\beta$-CROWN to analytically prove if $\mathcal{I}_{1}$ is actually a FIS for the transformed $S$ in $\mathbb{X}_{1}$. If so, $\mathcal{I}_{0} = \mathcal{T}^{-1}(\mathcal{I}_{1})$ (i.e. inverse transform $\mathcal{I}_{1}$ from $\mathbb{X}_{1}$ to $\mathbb{X}_{0}$, see \eqref{eqn:definition_transformed_set}) shall be the FIS for $S$ in $\mathbb{X}_{0}$. Note we do not adjust $\mathcal{I}_{1}$, instead, we adjust the INN, as the latter can automatically adjust itself using backpropagation from a well designed loss function. $\hfill$\myitem\label{myitem:heuristics_fix_fis_adjust_inn}

The above heuristics are illustrated by Fig.~\ref{fig:overall_framework}. 

\begin{figure}[t]
\centering
\includegraphics[width=0.8\linewidth]{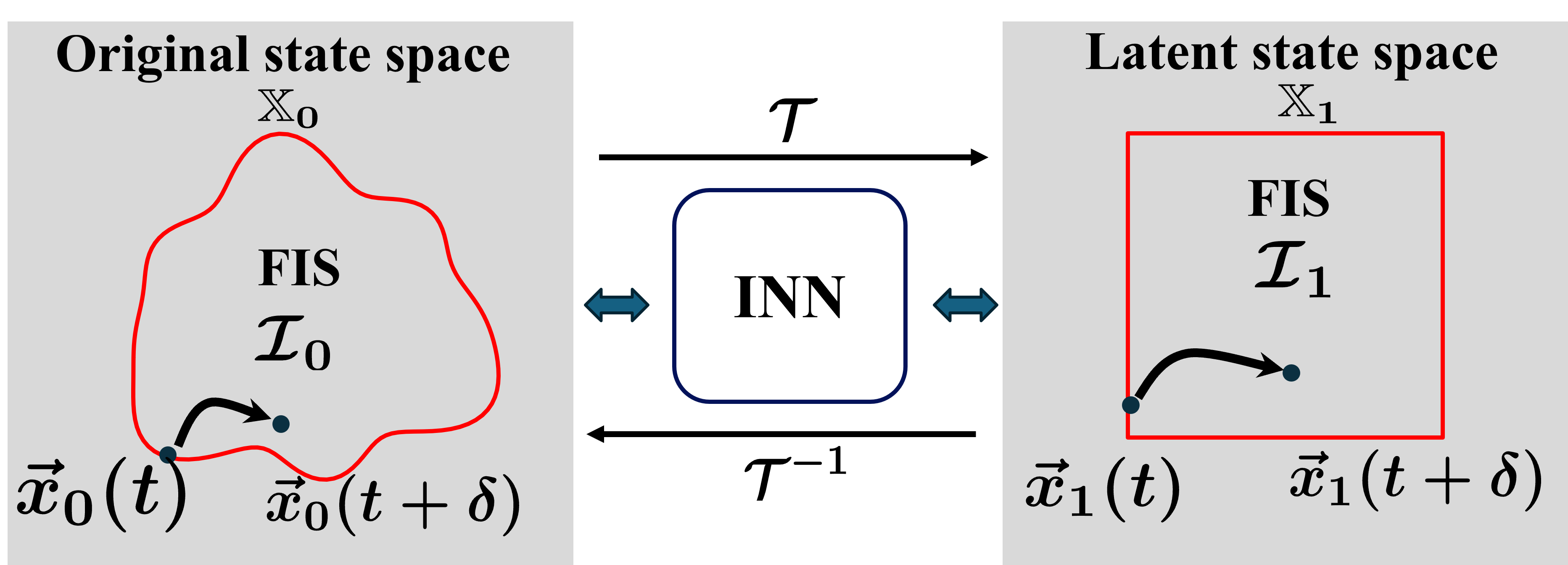}
\caption{Solution heuristics: a hyper-rectangular FIS constructed and proven in the latent state space maps to a nonlinear FIS in the original state space.}
\label{fig:overall_framework}
\end{figure}

In summary, this paper makes the following contributions.

\noindent 1) We propose a solution to find a FIS for an AI-controlled system $S$, where the AI-controller is given a priori. Specifically, our solution exploits INN to transform the complex and highly nonlinear $S$ from the original state space $\mathbb{X}_{0}$ to a latent state space $\mathbb{X}_{1}$, so that the transformed $S$ becomes more regular. We propose a specially designed loss function and training set, to train the INN, so that a hyper-rectangular FIS candidate $\mathcal{I}_{1}$ becomes more likely a real FIS for the transformed $S$ in $\mathbb{X}_{1}$.

\noindent 2) We theoretically prove that if $\mathcal{I}_{1}$ in $\mathbb{X}_{1}$ can pass a certain test by $\alpha,\beta$-CROWN, then it is an analytically proven hyper-rectangular FIS for $S$ in $\mathbb{X}_{1}$, and its INN inverse transformed counterpart $\mathcal{I}_{0}$ in $\mathbb{X}_{0}$ is an analytically proven FIS for $S$ in $\mathbb{X}_{0}$. 

\noindent 3) We carried out extensive evaluations on various AI-controlled systems. Our proposed solution finds analytically proven FISs for all of them, while the state-of-the-art baseline solution finds none. Our solution also achieves significantly smaller time costs on 40 of the 45 evaluated AI-controlled systems; and the centers of the found FISs also roughly meet the preferences set by domain experts.

The rest of the paper is organized as follows. Section~\ref{sect:RelatedWork} discusses related work. Section~\ref{sect:Background} introduces background technologies. Section~\ref{sect:ProblemFormulation} formulates the problem.  Section~\ref{sect:Solution} proposes and analyzes our solution. Section~\ref{sect:Evaluation} evaluates our solution.  Section~\ref{sect:Conclusion} concludes the paper.

%=============================
\section{Related Work}
\label{sect:RelatedWork}

How to find FIS (often in the form of Lyapunov stability study) is a long time topic in traditional control theory~\cite{khalil_nonlinear_2002}\cite{blanchini1999set}. 
For linear control systems, the most classic method is the \emph{Linear Matrix Inequalities} (LMI) method~\cite{duan2013}\cite{boyd1994}\cite{gahinet1994}. However, this method requires purely linear dynamics, which is not the case for nonlinear AI-controlled systems.
For nonlinear control systems, there is not a generic panacea solution for finding FIS~\cite{khalil2015}\cite{giesl2015}. The state-of-the-art methods include Lyapunov's indirect methods~\cite{khalil_nonlinear_2002}\cite{slotine1991applied}, Zubov's methods~\cite{zubov1964}\cite{camilli2001SIAM}, sum of squares programming~\cite{papachristodoulou2002cdc}\cite{Topcu2007ACC}, LaSalle's invariance principle~\cite{khalil_nonlinear_2002}\cite{lasalle1960IRE}, Krasovskii methods~\cite{krasovskiui1963}, and variable gradient methods~\cite{schultz1962}\cite{hang1970TAC}. However, these methods assume linear, polynomial, or other analytically tractable closed-form algebraic representations of the dynamics. Unfortunately, such assumptions do not suit AI-controlled systems involving NN components, as NN is well-known to be nonlinear, non-polynomial, and analytically intractable~\cite{prince2023}.

To avoid the analytically intractable problem, there are efforts to exploit formal method tools, such as the Satisfiability Modulo Theories (SMT)~\cite{zhou2022NIPS}, Mixed Integer Programming (MIP)~\cite{schwan2023TAC}, Hamilton-Jacobi (HJ) reachability analysis~\cite{ganai2024}, and Semi-Definite Programming (SDP)~\cite{yin2021TAC} to find FISs for NNs. However, SMT, MIP, and HJ methods have the well-known scalability problem; while SDP requires that the NN can be well approximated with smooth quadratic envelopes.

To address the above limitations, the state-of-the-art FIS finding method for AI-controlled system is the method proposed by Yang et al.~\cite{yang2024icml}. But as discussed in Section~\ref{sect:Introduction}, Yang et al.'s method is not for the cases where the controller is given a priori. It modifies the controller together with the FIS candidate until a FIS is found. In this paper, however, we focus on a different application context: a legal/regulatory authority needs to certify a vendor provided AI-controlled system. Therefore, the controller is given a priori. $\hfill$\myitem\label{myitem:why_yang_et_al_method_does_not_fit}

In case the AI-controlled system is given a priori, Harapanahalli et al.~\cite{harapanahalli2023csl} proposed a method to analytically prove if a given parallelotope FIS candidate is actually a FIS. But if the FIS candidate cannot be proven to be actually a FIS, the method does not specify how to search for another FIS candidate. That is, the method provides a sufficient condition to prove if a FIS candidate is actually a FIS; but it is not a method to find (proactively search for) FISs.

%=============================
\section{Background}
\label{sect:Background}

%........................................
\subsection{Invertible NN (INN) and Normalized INN (NINN)}
\label{subsect:InvertibleNnInnAndNormalizedInnNinn}

An INN~\cite{Laurent2017iclr} is an NN that learns a bijective mapping between the original $n$-dimensional state space $\mathbb{X}_{0}$ and the latent $n$-dimensional state space $\mathbb{X}_{1}$ (see (par\ref{myitem:heuristics_use_inn})). Once successfully trained, an INN can be denoted as a tuple of $(\mathcal{T}, \mathcal{T}^{-1})$, where $\mathcal{T}$ is a function of $\mathbb{X}_{0} \to \mathbb{X}_{1}$, while $\mathcal{T}^{-1}: \mathbb{X}_{1} \to \mathbb{X}_{0}$ is the inverse function of $\mathcal{T}$. Note, as ``bijective mappings,'' the range of $\mathcal{T}$ is the full set of $\mathbb{X}_{1}$, and vice versa for $\mathcal{T}^{-1}$. 

To speed up training, we can also insert a normalization layer $\mathcal{N}_{\mathsf{norm}}(\vec{\mu}, \vec{\sigma})$ between $\mathbb{X}_{0}$ and an INN $\mathcal{N}_{\mathsf{inn}} = (\mathcal{T}, \mathcal{T}^{-1})$, where $\vec{\mu} \in \mathbb{X}_{0}$, $\vec{\sigma} \in \mathbb{R}_{>0}^{n}$ are the configuration parameters given a priori. $\mathcal{N}_{\mathsf{norm}}(\vec{\mu}, \vec{\sigma})$ also consists of a transformation $\hat{\mathcal{T}}_{\vec{\mu}, \vec{\sigma}}$ and its inverse $\hat{\mathcal{T}}^{-1}_{\vec{\mu}, \vec{\sigma}}$, which together set up a bijective mapping between $\mathbb{X}_{0}$ and another (aka \emph{normalized}) $n$-dimensional state space $\hat{\mathbb{X}}_{0}$. 
Specifically,
\begin{eqnarray}
\forall \vec{x}_{0} \in \mathbb{X}_{0}, && \hat{\mathcal{T}}_{\vec{\mu}, \vec{\sigma}}(\vec{x}_{0}) 
\sdef 
(\vec{x}_{0} - \vec{\mu}) \oslash \vec{\sigma}, \nonumber \\
\forall \hat{\vec{x}}_{0} \in \hat{\mathbb{X}}_{0},  && \hat{\mathcal{T}}^{-1}_{\vec{\mu}, \vec{\sigma}}(\hat{\vec{x}}_{0})
\sdef
\hat{\vec{x}}_{0} \odot \vec{\sigma} + \vec{\mu}. \label{eqn:normalization}
\end{eqnarray}
\noindent where $\oslash$ and $\odot$ are respectively the component-wise division operator and the component-wise multiplication operator between two vectors. 

Correspondingly, $\mathcal{N}_{\mathsf{inn}} = (\mathcal{T}, \mathcal{T}^{-1})$ is now in charge of the bijective mapping between $\hat{\mathbb{X}}_{0}$ and $\mathbb{X}_{1}$.

We call the concatenated NN of $\mathcal{N}_{\mathsf{norm}}(\vec{\mu}, \vec{\sigma})$ and $\mathcal{N}_{\mathsf{inn}} = (\mathcal{T}, \mathcal{T}^{-1})$ the \emph{Normalized INN} (NINN), and denote it as $\tilde{\mathcal{N}}(\vec{\mu}, \vec{\sigma}, \mathcal{T}, \mathcal{T}^{-1})$. As $\mathcal{N}_{\mathsf{norm}}(\vec{\mu}, \vec{\sigma})$ is bijective between $\mathbb{X}_{0}$ and $\hat{\mathbb{X}}_{0}$, and $\mathcal{N}_{\mathsf{inn}} = (\mathcal{T}, \mathcal{T}^{-1})$ is bijective between $\hat{\mathbb{X}}_{0}$ and $\mathbb{X}_{1}$, $\tilde{\mathcal{N}}(\vec{\mu}, \vec{\sigma}, \mathcal{T}, \mathcal{T}^{-1})$ hence is bijective between $\mathbb{X}_{0}$ and $\mathbb{X}_{1}$. Denote the corresponding transformation from $\mathbb{X}_{0}$ to $\mathbb{X}_{1}$ and the inverse respectively as $\tilde{\mathcal{T}}$ and $\tilde{\mathcal{T}}^{-1}$, we have
\begin{eqnarray}
\forall \vec{x}_{0} \in \mathbb{X}_{0}, && 
\tilde{\mathcal{T}}(\vec{x}_{0}) = \mathcal{T}(\hat{\mathcal{T}}_{\vec{\mu}, \vec{\sigma}}(\vec{x}_{0})), \nonumber \\
\forall \vec{x}_{1} \in \mathbb{X}_{1}, && 
\tilde{\mathcal{T}}^{-1}(\vec{x}_{1}) = \hat{\mathcal{T}}^{-1}_{\vec{\mu}, \vec{\sigma}}(\mathcal{T}^{-1}(\vec{x}_{1})). 
\nonumber
\end{eqnarray}

Correspondingly, heuristics (par\ref{myitem:heuristics_use_inn})(par\ref{myitem:heuristics_fix_fis_adjust_inn}) and Fig.~\ref{fig:overall_framework} are adjusted: instead of training an INN, we shall train a NINN. Furthermore, when the context is unambiguous, we also denote NINN $\tilde{\mathcal{N}}(\vec{\mu}, \vec{\sigma}, \mathcal{T}, \mathcal{T}^{-1})$ with the tuple $(\tilde{\mathcal{T}}, \tilde{\mathcal{T}}^{-1})$. $\hfill$\myitem\label{myitem:heuristics_fix_fis_adjust_ninn}

%........................................
\subsection{Analytical Bound of NN Outputs: $\alpha,\beta$-CROWN}
\label{subsect:AnalyticalBoundOfNnOutputsAlphaBetaCrown}

$\alpha,\beta$-CROWN~\cite{zhang2018nips}\cite{wang2021betanips}\cite{xu2020fast}\cite{xu2020nips} is a state-of-the-art verification tool to analytically prove if a given polytope set $\mathcal{B}$ bounds the outputs of a complex function (typically an NN, or consisting of NN component(s)) over a given \emph{hyper-rectangular} (see \eqref{eqn:hyper-rectangle}) input domain .

$\alpha,\beta$-CROWN allows the complex function to conduct various computations, including various NN layering operations (fully connected, convolutional, pooling, etc.), various activation/nonlinear functions (ReLU, sigmoid, tanh, arctan, sin, cos, tan, gelu, pow, multiplication etc.), any linear specifications on the NN output etc.~\cite{zhang2018nips}\cite{wang2021betanips}\cite{xu2020fast}\cite{xu2020nips}.  

Specifically, $\alpha,\beta$-CROWN takes four inputs, respectively denoted as $f$, $\mathcal{X}$, $\mathcal{B}$, and $T_{\mathsf{max}}$ in the rest of this subsection. 

The first input to $\alpha,\beta$-CROWN is the complex function $f: \mathbb{R}^{n_{0}} \to \mathbb{R}^{n_{1}}$ to be analyzed, where $n_{0}$, $n_{1}$ are respectively the dimensions of the input and output vector spaces for $f$. Note $n_{0} = n_{1}$ is allowed.

The second input to $\alpha,\beta$-CROWN is the given input domain of concern $\mathcal{X} \subseteq \mathbb{R}^{n_{0}}$ for $f$. Note  $\mathcal{X}$ must be \emph{hyper-rectangular}, 
\begin{eqnarray}
\mbox{i.e., } \qquad \mathcal{X} = \{\vec{x} \mid \vec{x} \in \mathbb{R}^{n_{0}} \mbox{ and } \underline{\vec{x}} \preccurlyeq \vec{x} \preccurlyeq \overline{\vec{x}}\}, \label{eqn:hyper-rectangle}
\end{eqnarray}
\noindent where $\underline{\vec{x}} \in \mathbb{R}^{n_{0}}$ and $\overline{\vec{x}} \in \mathbb{R}^{n_{0}}$ respectively denote the lower and upper bounds of the input $\vec{x}$ to $f$; ``$\preccurlyeq$'' is the \emph{component-wise vector not-greater-than} operator, meaning $\underline{\vec{x}}[i] \leqslant \vec{x}[i] \leqslant \overline{\vec{x}}[i]$ ($\forall i \in \{1$, $2$, $\dots$, $n_{0}\}$), where $\vec{\bullet}[i]$ means the $i$th component of vector $\vec{\bullet}$.
%; the same applies to the \emph{component-wise vector not-less-than} operator ``$\succcurlyeq$''.

The third input to $\alpha,\beta$-CROWN is a given candidate output bound set $\mathcal{B} \subseteq \mathbb{R}^{n_{1}}$ in $f$'s output vector space. Note $\mathcal{B}$ must be \emph{polytope}, i.e.
\begin{eqnarray}
\mathcal{B} = \{\vec{y} \mid \vec{y} \in \mathbb{R}^{n_{1}} \mbox{ and } C_{\mathcal{B}} \vec{y} \preccurlyeq \vec{d}_{\mathcal{B}} \}, \label{eqn:polytope}
\end{eqnarray}
\noindent where $C_{\mathcal{B}} \in \mathbb{R}^{r \times n_{1}}$ and $\vec{d}_{\mathcal{B}} \in \mathbb{R}^{r}$. We can interpret $\mathcal{B}$ as $r$ polytope \emph{output constraints}, where the $j$th ($j = 1$, $2$, $\dots$, $r$) constraint is
\begin{eqnarray}
C_{\mathcal{B}}[j,:]\vec{y} \leqslant \vec{d}_{\mathcal{B}}[j],
\end{eqnarray}
\noindent where $\bullet[j,:]$ means the $j$th row of matrix $\bullet$. Note in case $n_{0} = n_{1}$, $\mathcal{B}$ can be $\mathcal{X}$, as hyper-rectangle is also polytope. 

The fourth input to $\alpha,\beta$-CROWN is the maximum computing time limit $T_{\mathsf{max}}$. Verification will be terminated with the result ``\emph{unknown}'' if this computing time limit is reached.

Given the above inputs, if $\alpha,\beta$-CROWN terminates within $T_{\mathsf{max}}$ with the result ``\emph{true},'' then it is \emph{analytically proven}~\cite{zhang2018nips}\cite{wang2021betanips}\cite{xu2020fast}\cite{xu2020nips} that 
\begin{eqnarray}
\forall \vec{x} \in \mathcal{X}, f(\vec{x}) \in \mathcal{B}; 
\mbox{ i.e., } 
\forall \vec{x}\in\mathcal{X}, C_{\mathcal{B}}f(\vec{x}) \preccurlyeq \vec{d}_{\mathcal{B}}. \label{eqn:true_result} 
\end{eqnarray}
If $\alpha,\beta$-CROWN terminates within $T_{\mathsf{max}}$ with the result ``\emph{false},'' then it is analytically proven~\cite{zhang2018nips}\cite{wang2021betanips}\cite{xu2020fast}\cite{xu2020nips} that  
\begin{eqnarray}
\exists \vec{x} \in \mathcal{X}, f(\vec{x}) \notin \mathcal{B}; 
\mbox{ i.e., } 
\exists \vec{x}\in\mathcal{X}, C_{\mathcal{B}}f(\vec{x}) \not\preccurlyeq \vec{d}_{\mathcal{B}}. 
\nonumber
\end{eqnarray}
If $\alpha,\beta$-CROWN terminates within $T_{\mathsf{max}}$ with the result ``\emph{unknown},'' then $\alpha,\beta$-CROWN realizes that it is unable to determine whether the result should be ``true'' or ``false.''  

In summary, we can denote $\alpha,\beta$-CROWN as a function $\alpha$\_$\beta$\_$\mathsf{CROWN}$($f$, $\mathcal{X}$, $\mathcal{B}$, $T_{\mathsf{max}}$), which returns ``true,'' ``false,'' or ``unknown'' within $T_{\mathsf{max}}$. $\hfill$\myitem\label{myitem:alpha_beta_CROWN_function_notation}

As $\alpha,\beta$-CROWN evolves, the chance that it can give deterministic answers (true or false) within $T_{\mathsf{max}}$ is improving~\cite{zhang2018nips}\cite{wang2021betanips}\cite{xu2020fast}\cite{xu2020nips}.

%=============================
\section{Problem Formulation}
\label{sect:ProblemFormulation}

Our problem formulation shall start from clarifying the details on an AI-controlled system $S$. 

First, a controlled system must have a \emph{controller}, and the controlled target. The latter is called the ``\emph{plant}.'' The aforementioned ``\emph{state}'' (see (par\ref{myitem:state})), rigorously speaking, should mean the \emph{state of the plant}. The controller takes as input the state of the plant $\vec{x}(t) \in \mathbb{R}^{n}$ (where $t$ is the time of concern), and outputs an \emph{actuation}, denoted as $\vec{u}(t) \in \mathbb{R}^{m}$, where $m$ is the dimension of the \emph{actuation space}. Thus, the controller is actually a function $\pi(\vec{x}(t)): \mathbb{R}^{n} \to \mathbb{R}^{m}$, where 
\begin{eqnarray}
\vec{u}(t) = \pi(\vec{x}(t)). 
\nonumber
\end{eqnarray}

When the controller outputs the actuation $\vec{u}(t)$, it is applied back to the plant, and the plant's state evolves accordingly. Formally, this means 
\begin{eqnarray}
\dot{\vec{x}} = f_{\mathsf{plt}}(\vec{x}(t), \vec{u}(t)) 
= f_{\mathsf{plt}}(\vec{x}(t), \pi(\vec{x}(t))) \sdef f_{\mathsf{cl}}(\vec{x}(t)), \label{eqn:dot_x}
\end{eqnarray}
\noindent where $\dot{\vec{x}}$ is the time derivative of $\vec{x}(t)$, and $f_{\mathsf{plt}}$ and $f_{\mathsf{cl}}$ are respectively called the \emph{plant model} and the \emph{closed-loop dynamics}.

When the controller is an AI-controller, and AI refers to an NN (see (par\ref{myitem:ai_is_nn_ai})), then the controller is typically run on discrete computers. 
Correspondingly, there are mature routines to convert the \emph{continuous time model} of \eqref{eqn:dot_x} to the below \emph{discretized model}~\cite{astrom1997}:
\begin{eqnarray}
\vec{x}_{\mathsf{d}}(k+1) &=& f_{\mathsf{pltd}}(\vec{x}_{\mathsf{d}}(k), \vec{u}_{\mathsf{d}}(k)) 
= f_{\mathsf{pltd}}(\vec{x}_{\mathsf{d}}(k), \pi(\vec{x}_{\mathsf{d}}(k))) \nonumber \\
&=& f_{\mathsf{cld}}(\vec{x}_{\mathsf{d}}(k)). \label{eqn:x_k+1}
\end{eqnarray}
\noindent where $k = 0$, $1$, $\dots$; $\vec{x}_{\mathsf{d}}(k) \sdef \vec{x}(t_{0}+k\delta)$ is the \emph{discretized state}; $t_{0}$ is the initial time instance; $\delta$ is the sensing/actuation period;
\begin{eqnarray}
\vec{u}_{\mathsf{d}}(k) \sdef \pi(\vec{x}_{\mathsf{d}}(k)) \label{eqn:definition_discretized_controller}
\end{eqnarray} 
\noindent is the \emph{discretized actuation}; note $\forall t \in [t_{0}+k\delta, t_{0}+(k+1)\delta)$ ($k = 0$, $1$, $\dots$), $\vec{u}(t) = \vec{u}_{\mathsf{d}}(k)$, i.e. the actuation is \emph{zero-order hold}; and $f_{\mathsf{pltd}}$ and $f_{\mathsf{cld}}$ are respectively called the \emph{discretized plant model} and the \emph{discretized closed-loop dynamics}.  \phantom{x} $\hfill$\myitem\label{myitem:zero_hold}

Typically, an AI-controller $\pi$ is directly trained upon a discretized plant model $f_{\mathsf{pltd}}$ using reinforcement learning~\cite{sutton1998reinforcement}, instead of trained on a physical plant (which follows the continuous time plant model $f_{\mathsf{plt}}$, see \eqref{eqn:dot_x}). In this way, the training can be conducted entirely on computer. This training preference is due to several reasons. First, AI-controller training typically carries out millions of operations on the plant. If conducted on the physical plant, so many operations can damage the plant, or cost too much time (computer simulated world's time can be orders of magnitude faster than the physical world time). Second, during the training, the immature AI-controller may issue dangerous operations, which may cause serious accidents if the training is conducted on the physical plant. $\hfill$\myitem\label{myitem:controller_training_uses_discretized_plant_model}

Back to the discretized model of \eqref{eqn:x_k+1}, a \emph{discretized FIS} $\mathcal{I}_{\mathsf{d}}$ is defined as a subset of the $n$-dimensional state space $\mathbb{R}^{n}$, where given any $\vec{x}_{\mathsf{d}}(0) \in \mathcal{I}_{\mathsf{d}}$, the future \emph{discretized state trajectory} $\{\vec{x}_{\mathsf{d}}(k)\}_{k = 0, 1, \dots}$ derived from \eqref{eqn:x_k+1} always remains in $\mathcal{I}_{\mathsf{d}}$. $\hfill$\myitem\label{myitem:definition_discretized_fis} 

If we can find a hyper-rectangular discretized FIS $\mathcal{I}_{\mathsf{d}}$, and given  $\vec{x}_{\mathsf{d}}(k) \in \mathcal{I}_{\mathsf{d}}$ ($\forall k \in \mathbb{Z}_{\geqslant 0}$, note $\mathcal{I}_{\mathsf{d}}$ is independent from $\vec{x}_{\mathsf{d}}(k)$), then we have mature tools~\cite{zhang2018nips}\cite{wang2021betanips}\cite{xu2020fast}\cite{xu2020nips} to analyze \eqref{eqn:definition_discretized_controller} to find a hyper-rectangular bound $\mathcal{B}_{\mathsf{ad}}$ (independent from $\vec{x}_{\mathsf{d}}(k)$) for $\vec{u}_{\mathsf{d}}(k)$. With $\mathcal{B}_{\mathsf{ad}}$ and zero-order hold actuation (see (par\ref{myitem:zero_hold})), there are often (depending on \eqref{eqn:dot_x}) mature routines~\cite{astrom1997} to find the hyper-rectangular bound $\mathcal{B}_{\mathsf{ed}}$ (independent from $\vec{x}_{\mathsf{d}}(k)$) on the \emph{discretization error}: $(\vec{x}(t) - \vec{x}_{\mathsf{d}}(k))$ ($\forall t \in [t_{0}+k\delta, t_{0}+(k+1)\delta)$). In this paper, we shall focus on the cases where such mature routines to derive $\mathcal{B}_{\mathsf{ed}}$ from  $\mathcal{B}_{\mathsf{ad}}$ exist. Such cases are common when the plant is a macroscopic physical world object~\cite{astrom1997}.  \phantom{xxx} $\hfill$\myitem\label{myitem:discretization_error_bound}

For the aforementioned cases, once a hyper-rectangular discretized FIS $\mathcal{I}_{\mathsf{d}}$ independent from $\vec{x}_{\mathsf{d}}(k)$ ($\forall k \in \mathbb{Z}_{\geqslant 0}$) is found for the discretized model of \eqref{eqn:x_k+1}, then $\forall \vec{x}(t_{0}) = \vec{x}_{\mathsf{d}}(0) \in \mathcal{I}_{\mathsf{d}}$, the future discretized trajectory $\{\vec{x}_{\mathsf{d}}(k)\}_{k = 0, 1, \dots}$ is bounded by $\mathcal{I}_{\mathsf{d}}$, and the future continuous trajectory $\{\vec{x}(t)\}_{t \in [t_{0}, +\infty)}$ is bounded by $(\mathcal{I}_{\mathsf{d}} + \mathcal{B}_{\mathsf{ed}})$ (here ``$+$'' means the vector sum of sets, aka Minkowski sum). Therefore, \emph{as long as $(\mathcal{I}_{\mathsf{d}}+\mathcal{B}_{\mathsf{ed}}) \cap \mathcal{F} = \varnothing$, the continuous system is safe.} \phantom{xxx} $\hfill$\myitem\label{myitem:continuous_system_safety}

(par\ref{myitem:continuous_system_safety}) and (par\ref{myitem:discretization_error_bound}) show that the key/only missing piece in guaranteeing proven safety of an AI-controlled system $S$ lies in finding $\mathcal{I}_{\mathsf{d}}$, i.e. a hyper-rectangular discretized FIS for the discretized model \eqref{eqn:x_k+1} of $S$. Therefore, 
in the following, \emph{we shall focus on finding this $\mathcal{I}_{\mathsf{d}}$}. \phantom{xxx} $\hfill$\myitem\label{myitem:focus_problem}

Once this $\mathcal{I}_{\mathsf{d}}$ is found, the \emph{proven safety} guarantee shall mean $(\mathcal{I}_{\mathsf{d}}+\mathcal{B}_{\mathsf{ed}}) \cap \mathcal{F} = \varnothing$ (see (par\ref{myitem:continuous_system_safety})) instead of $\mathcal{I} \cap \mathcal{F} = \varnothing$ (see (par\ref{myitem:how_to_let_authority_trust})). An exception is when $S$ is discrete (i.e. of model \eqref{eqn:x_k+1}) to begin with, then the proven safety guarantee means $\mathcal{I}_{\mathsf{d}} \cap \mathcal{F} = \varnothing$. Also note that when the plant is a macroscopic physical world object, often $\mathsf{lim}_{\delta \rightarrow 0} \mathcal{B}_{\mathsf{ed}} = \varnothing$~\cite{astrom1997}, hence $\mathsf{lim}_{\delta \rightarrow 0}(\mathcal{I}_{\mathsf{d}} + \mathcal{B}_{\mathsf{ed}}) = \mathcal{I}_{\mathsf{d}}$. Therefore, with a sufficiently small sensing/actuation period $\delta$, the intended proven safety guarantee of $(\mathcal{I}_{\mathsf{d}} + \mathcal{B}_{\mathsf{ed}}) \cap \mathcal{F} = \varnothing$ can be empirically relaxed to $\mathcal{I}_{\mathsf{d}} \cap \mathcal{F} = \varnothing$. \phantom{xxx} $\hfill$\myitem\label{myitem:discretization_proven_safety}

Due to (par\ref{myitem:controller_training_uses_discretized_plant_model})(par\ref{myitem:focus_problem})(par\ref{myitem:discretization_proven_safety}), we shall focus on the discretized state, discretized actuation, discretized plant model, discretized closed-loop dynamics, discretized model, discretized FIS, and discretized state trajectory, instead of their continuous time counterparts. Hence, to simplify narration, unless otherwise denoted, \emph{we no longer explicitly add the prefix ``discretized'' when referring to these concepts}.

%=============================
\section{Solution}
\label{sect:Solution}

As per Section~\ref{sect:Introduction} and \ref{sect:ProblemFormulation}, this paper shall focus on finding a hyper-rectangular  FIS $\mathcal{I}_{\mathsf{d}}$ for the  model \eqref{eqn:x_k+1} of $S$ (see (par\ref{myitem:focus_problem})), where the AI-controller $\pi$ is given a priori (see (par\ref{myitem:focus_of_this_paper})). 

%-------------------------------------
%\subsection{Verification Formulation}
%
%\label{subsect:VerificationFormulation}

However, as per heuristics (par\ref{myitem:heuristics_use_inn})(par\ref{myitem:heuristics_fix_fis_adjust_inn})(par\ref{myitem:heuristics_fix_fis_adjust_ninn}), the chance that the highly nonlinear system $S$ has a hyper-rectangular FIS is low. To remedy, we propose to construct a transformation function $\tilde{\mathcal{T}}$ and its inverse $\tilde{\mathcal{T}}^{-1}$, to transform the \emph{original} $n$-dimensional state space $\mathbb{X}_{0}$ to another (aka \emph{latent}) $n$-dimensional state space $\mathbb{X}_{1}$, so that $S$'s nonlinear behaviors in $\mathbb{X}_{0}$ become more \emph{regular} in $\mathbb{X}_{1}$: specifically, a hyper-rectangular  FIS $\mathcal{I}_{\mathsf{d},1}$ can be found in $\mathbb{X}_{1}$ for $S$. \phantom{x} $\hfill$\myitem\label{myitem:definition_regular}  

Formally, $\tilde{\mathcal{T}}: \mathbb{X}_{0} \to \mathbb{X}_{1}$ and $\tilde{\mathcal{T}}^{-1}: \mathbb{X}_{1} \to \mathbb{X}_{0}$ is a pair of transformation functions between $n$-dimensional state spaces $\mathbb{X}_{0}$ and $\mathbb{X}_{1}$, such that
\begin{eqnarray}
\forall \vec{x}_{0} \in \mathbb{X}_{0}, \vec{x}_{1} = \tilde{\mathcal{T}}(\vec{x}_{0}) \in \mathbb{X}_{1} \mbox{ and } \vec{x}_{0} = \tilde{\mathcal{T}}^{-1}(\vec{x}_{1}); \nonumber \\
\forall \vec{x}_{1} \in \mathbb{X}_{1}, \vec{x}_{0} = \tilde{\mathcal{T}}^{-1}(\vec{x}_{1}) \in \mathbb{X}_{0} \mbox{ and } \vec{x}_{1} = \tilde{\mathcal{T}}(\vec{x}_{0}). \label{eqn:definition_inverse_transformations} 
\end{eqnarray}

Given $\mathcal{X}_{0} \subseteq \mathbb{X}_{0}$ or $\mathcal{X}_{1} \subseteq \mathbb{X}_{1}$, we further define
\begin{eqnarray}
\tilde{\mathcal{T}}(\mathcal{X}_{0}) &\sdef& \{ \vec{x}_{1} \mid \exists \vec{x}_{0} \in \mathcal{X}_{0} \mbox{ s.t. } \vec{x}_{1} = \tilde{\mathcal{T}}(\vec{x}_{0}) \}, \nonumber \\
\tilde{\mathcal{T}}^{-1}(\mathcal{X}_{1}) &\sdef& \{ \vec{x}_{0} \mid \exists \vec{x}_{1} \in \mathcal{X}_{1} \mbox{ s.t. } \vec{x}_{0} = \tilde{\mathcal{T}}^{-1}(\vec{x}_{1}) \}. \quad \label{eqn:definition_transformed_set} 
\end{eqnarray}

We have a trivial proposition for \eqref{eqn:definition_transformed_set}: 

\begin{proposition}
\label{proposition:latent_state_of_a_X0_state_is_in_X1}
\textbf{Claim 1}) $\forall \vec{x}_{0} \in \tilde{\mathcal{T}}^{-1}(\mathcal{X}_{1})$, $\tilde{\mathcal{T}}(\vec{x}_{0}) \in \mathcal{X}_{1}$; \textbf{Claim 2}) $\forall \vec{x}_{1} \in \tilde{\mathcal{T}}(\mathcal{X}_{0})$, $\tilde{\mathcal{T}}^{-1}(\vec{x}_{1}) \in \mathcal{X}_{0}$. $\hfill\square$
\end{proposition}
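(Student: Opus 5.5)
Both claims follow by unpacking the image-set definitions in \eqref{eqn:definition_transformed_set} and then applying the mutual-inverse relations \eqref{eqn:definition_inverse_transformations}; no other earlier result is needed. I will prove each claim separately, and they are symmetric, so the second can mirror the first.

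For \textbf{Claim 1}, fix an arbitrary $\vec{x}_{0} \in \tilde{\mathcal{T}}^{-1}(\mathcal{X}_{1})$. By the second line of \eqref{eqn:definition_transformed_set}, there exists some $\vec{x}_{1} \in \mathcal{X}_{1}$ with $\vec{x}_{0} = \tilde{\mathcal{T}}^{-1}(\vec{x}_{1})$. Now invoke the second line of \eqref{eqn:definition_inverse_transformations}, applied to this $\vec{x}_{1} \in \mathcal{X}_{1} \subseteq \mathbb{X}_{1}$. It states that if $\vec{x}_{0} = \tilde{\mathcal{T}}^{-1}(\vec{x}_{1})$, then $\vec{x}_{1} = \tilde{\mathcal{T}}(\vec{x}_{0})$. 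Hence $\tilde{\mathcal{T}}(\vec{x}_{0}) = \vec{x}_{1} \in \mathcal{X}_{1}$. Since $\vec{x}_{0}$ was arbitrary, Claim 1 holds.

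For \textbf{Claim 2}, fix $\vec{x}_{1} \in \tilde{\mathcal{T}}(\mathcal{X}_{0})$. By the first line of \eqref{eqn:definition_transformed_set}, there is $\vec{x}_{0} \in \mathcal{X}_{0}$ with $\vec{x}_{1} = \tilde{\mathcal{T}}(\vec{x}_{0})$. The first line of \eqref{eqn:definition_inverse_transformations}, applied to this $\vec{x}_{0}$, gives $\vec{x}_{0} = \tilde{\mathcal{T}}^{-1}(\vec{x}_{1})$. Hence $\tilde{\mathcal{T}}^{-1}(\vec{x}_{1}) = \vec{x}_{0} \in \mathcal{X}_{0}$.

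The only subtle point is that each step uses $\tilde{\mathcal{T}}$ and $\tilde{\mathcal{T}}^{-1}$ as well-defined single-valued functions. For example, in Claim 1 we need $\tilde{\mathcal{T}}(\vec{x}_{0})$ to be the same point regardless of which preimage witness $\vec{x}_{1}$ was chosen. This is guaranteed by their being functions, with the bijectivity of the NINN from Section~\ref{subsect:InvertibleNnInnAndNormalizedInnNinn}. I would remark on this explicitly; otherwise the argument is a direct unfolding of definitions with no real obstacle.
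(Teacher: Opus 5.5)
Your proof is correct and follows essentially the same route as the paper: unpack the image-set definition to obtain a witness $\vec{x}_{1} \in \mathcal{X}_{1}$ with $\vec{x}_{0} = \tilde{\mathcal{T}}^{-1}(\vec{x}_{1})$, then use $\tilde{\mathcal{T}} \circ \tilde{\mathcal{T}}^{-1} = \mathrm{id}$ (the paper's ``apply $\tilde{\mathcal{T}}$ to both sides'') to conclude $\tilde{\mathcal{T}}(\vec{x}_{0}) = \vec{x}_{1} \in \mathcal{X}_{1}$, with Claim 2 symmetric. Your closing remark overstates what is needed: $\tilde{\mathcal{T}}(\vec{x}_{0})$ depends only on $\vec{x}_{0}$ simply because $\tilde{\mathcal{T}}$ is a function, and the only property used is the inverse relation \eqref{eqn:definition_inverse_transformations}, not bijectivity.
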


\noindent\textit{Proof}: 
As $\vec{x}_{0} \in \tilde{\mathcal{T}}^{-1}(\mathcal{X}_{1})$, due to the definition of $\tilde{\mathcal{T}}^{-1}(\mathcal{X}_{1})$ (see \eqref{eqn:definition_transformed_set}), we have
\begin{eqnarray}
\exists \vec{x}_{1} \in \mathcal{X}_{1} \mbox{ s.t. } \vec{x}_{0} = \tilde{\mathcal{T}}^{-1}(\vec{x}_{1}). \label{eqn:exists_x_1}
\end{eqnarray}
Apply $\tilde{\mathcal{T}}$ to both sides of \eqref{eqn:exists_x_1}, we get
\begin{eqnarray}
\vec{x}_{1} = \tilde{\mathcal{T}}(\vec{x}_{0}). \label{eqn:x_1}
\end{eqnarray}

Hence $\vec{x}_{1} \in \mathcal{X}_{1} \mbox{ (see \eqref{eqn:exists_x_1})} \Rightarrow \tilde{\mathcal{T}}(\vec{x}_{0}) \in \mathcal{X}_{1}$. Claim 1 holds. 

Claim 2 can be proved the same way. $\hfill\blacksquare$

\vspace{0.5ex}

Denote the  states $\vec{x}_{\mathsf{d}}$ in the original state space $\mathbb{X}_{0}$ as $\vec{x}_{\mathsf{d},0}$, $f_{\mathsf{pltd}}$ as $f_{\mathsf{pltd}, 0}$, $f_{\mathsf{cld}}$ as $f_{\mathsf{cld},0}$, $\vec{u}_{\mathsf{d}}$ as $\vec{u}_{\mathsf{d},0}$, and $\pi$ as $\pi_{0}$, we can rewrite $S$'s  model \eqref{eqn:x_k+1} in $\mathbb{X}_{0}$ as
\begin{eqnarray}
&& \vec{x}_{\mathsf{d},0}(k+1) = f_{\mathsf{pltd},0}(\vec{x}_{\mathsf{d},0}(k), \vec{u}_{\mathsf{d},0}(k)) \nonumber \\
&=& f_{\mathsf{pltd},0}(\vec{x}_{\mathsf{d},0}(k), \pi_{0}(\vec{x}_{\mathsf{d},0}(k))) 
= f_{\mathsf{cld},0}(\vec{x}_{\mathsf{d},0}(k)). \quad \label{eqn:x_k+1_original_state_space}
\end{eqnarray}

For $k = 0$, $1$, $\dots$, define
\begin{eqnarray}
\vec{x}_{\mathsf{d},1}(k) \sdef \tilde{\mathcal{T}}(\vec{x}_{\mathsf{d},0}(k)) \, \mbox{ and }  \, \vec{u}_{\mathsf{d},1}(k) \sdef \tilde{\mathcal{T}}(\vec{u}_{\mathsf{d},0}(k)). 
\nonumber
\end{eqnarray}

Apply $\tilde{\mathcal{T}}$ to both sides of \eqref{eqn:x_k+1_original_state_space}, and combine with \eqref{eqn:definition_inverse_transformations}, we get $S$'s  model in the latent state space $\mathbb{X}_{1}$ as 
\begin{eqnarray}
&& \vec{x}_{\mathsf{d},1}(k+1) = \tilde{\mathcal{T}}(\vec{x}_{\mathsf{d},0}(k+1)) \nonumber \\
&=& \tilde{\mathcal{T}}(f_{\mathsf{cld},0}(\tilde{\mathcal{T}}^{-1}(\vec{x}_{\mathsf{d},1}(k)))) \sdef f_{\mathsf{cld},1}(\vec{x}_{\mathsf{d},1}(k)). \quad  \label{eqn:x_k+1_latent_state_space}
\end{eqnarray}
\noindent Similarly, we can derive the discretized actuation in $\mathbb{X}_{1}$ to be  
\begin{eqnarray}
\vec{u}_{\mathsf{d},1}(k) 
= \tilde{\mathcal{T}}(\pi_{0}(\tilde{\mathcal{T}}^{-1}(\vec{x}_{\mathsf{d},1}(k))))
\sdef \pi_{1}(\vec{x}_{\mathsf{d},1}(k)). \nonumber 
\end{eqnarray}

\vspace{1ex} 

We have the following proposition, lemma, and theorem. 

\begin{proposition}
\label{proposition:SufficientInductiveConditionForFis}
For an AI-controlled system $S$ of  model \eqref{eqn:x_k+1}, a set $\mathcal{I}_{\mathsf{d}} \subseteq \mathbb{R}^{n}$ is a  FIS of $S$ if $\forall \vec{x}_{\mathsf{d}}(0) \in \mathcal{I}_{\mathsf{d}}$, $\vec{x}_{\mathsf{d}}(1) \in \mathcal{I}_{\mathsf{d}}$. $\hfill\square$
\end{proposition}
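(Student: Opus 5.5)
The plan is a straightforward induction on the time index $k$, using only the definition of a FIS in (par\ref{myitem:definition_discretized_fis}) and the fact that the model \eqref{eqn:x_k+1} is time-invariant. That is, the map $f_{\mathsf{cld}}$ does not depend on $k$, so $\vec{x}_{\mathsf{d}}(k+1) = f_{\mathsf{cld}}(\vec{x}_{\mathsf{d}}(k))$ for every $k$.

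First I would restate the hypothesis in terms of $f_{\mathsf{cld}}$. The premise ``$\forall \vec{x}_{\mathsf{d}}(0) \in \mathcal{I}_{\mathsf{d}}$, $\vec{x}_{\mathsf{d}}(1) \in \mathcal{I}_{\mathsf{d}}$'' means exactly
\begin{eqnarray}
\forall \vec{y} \in \mathcal{I}_{\mathsf{d}}, \quad f_{\mathsf{cld}}(\vec{y}) \in \mathcal{I}_{\mathsf{d}}. \nonumber
\end{eqnarray}
This holds because $\vec{x}_{\mathsf{d}}(1) = f_{\mathsf{cld}}(\vec{x}_{\mathsf{d}}(0))$ and $\vec{x}_{\mathsf{d}}(0)$ ranges over all of $\mathcal{I}_{\mathsf{d}}$. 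The point of this step is to detach the hypothesis from the particular index $0$: it becomes a statement about the one-step map, i.e.\ $f_{\mathsf{cld}}(\mathcal{I}_{\mathsf{d}}) \subseteq \mathcal{I}_{\mathsf{d}}$.

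Next I fix an arbitrary $\vec{x}_{\mathsf{d}}(0) \in \mathcal{I}_{\mathsf{d}}$ and prove by induction on $k \in \mathbb{Z}_{\geqslant 0}$ that $\vec{x}_{\mathsf{d}}(k) \in \mathcal{I}_{\mathsf{d}}$.
\begin{itemize}
\item \emph{Base case} ($k=0$): this is the assumption on $\vec{x}_{\mathsf{d}}(0)$.
\item \emph{Inductive step}: if $\vec{x}_{\mathsf{d}}(k) \in \mathcal{I}_{\mathsf{d}}$, then $\vec{x}_{\mathsf{d}}(k+1) = f_{\mathsf{cld}}(\vec{x}_{\mathsf{d}}(k))$ by \eqref{eqn:x_k+1}. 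Applying the restated hypothesis with $\vec{y} = \vec{x}_{\mathsf{d}}(k)$ gives $\vec{x}_{\mathsf{d}}(k+1) \in \mathcal{I}_{\mathsf{d}}$.
\end{itemize}
Hence the whole trajectory $\{\vec{x}_{\mathsf{d}}(k)\}_{k=0,1,\dots}$ stays in $\mathcal{I}_{\mathsf{d}}$. This is precisely the definition of a FIS in (par\ref{myitem:definition_discretized_fis}).

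There is no real obstacle. The only point needing care is the reindexing in the first step: the hypothesis is phrased for the step from $0$ to $1$, and it transfers to every step from $k$ to $k+1$ only because the closed-loop map $f_{\mathsf{cld}}$ is the same at every step. One may view the state $\vec{x}_{\mathsf{d}}(k)$ as a fresh initial state $\vec{x}_{\mathsf{d}}(0)$ of the same autonomous system. I would state this time-invariance explicitly rather than leave it implicit.
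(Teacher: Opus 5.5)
Your proof is correct and takes the same approach as the paper. The paper's proof is the one line ``regard $\vec{x}_{\mathsf{d}}(1)$ as the new $\vec{x}_{\mathsf{d}}(0)$ and iterate''; your induction on $k$, with the hypothesis restated as $f_{\mathsf{cld}}(\mathcal{I}_{\mathsf{d}}) \subseteq \mathcal{I}_{\mathsf{d}}$ and the time-invariance of $f_{\mathsf{cld}}$ stated explicitly, is a more careful write-up of that argument.
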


\noindent\textit{Proof}:
Regard $\vec{x}_{\mathsf{d}}(1)$ as the new $\vec{x}_{\mathsf{d}}(0)$, and the induction can go on forever. $\hfill\blacksquare$

\begin{lemma}
\label{lemma:SufficientConditionForFISinOriginalStateSpace}
If $\mathcal{I}_{\mathsf{d},1} \subseteq \mathbb{X}_{1}$ is a  FIS for the  model \eqref{eqn:x_k+1_latent_state_space} of $S$ in state space $\mathbb{X}_{1}$, then $\mathcal{I}_{\mathsf{d},0} = \tilde{\mathcal{T}}^{-1}(\mathcal{I}_{\mathsf{d},1})$ is a  FIS for the  model \eqref{eqn:x_k+1_original_state_space} of $S$ in state space $\mathbb{X}_{0}$. $\hfill\square$
\end{lemma}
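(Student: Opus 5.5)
We will reduce the claim to a one-step inclusion via Proposition~\ref{proposition:SufficientInductiveConditionForFis}. It then suffices to show that for any $\vec{x}_{\mathsf{d},0}(0) \in \mathcal{I}_{\mathsf{d},0} = \tilde{\mathcal{T}}^{-1}(\mathcal{I}_{\mathsf{d},1})$, the successor $\vec{x}_{\mathsf{d},0}(1) = f_{\mathsf{cld},0}(\vec{x}_{\mathsf{d},0}(0))$ also lies in $\mathcal{I}_{\mathsf{d},0}$. (Alternatively, one can argue directly for all $k$ by mapping whole trajectories; the one-step route is cleaner.)

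\emph{Step 1: move the initial state to the latent space.} Define $\vec{x}_{\mathsf{d},1}(0) \sdef \tilde{\mathcal{T}}(\vec{x}_{\mathsf{d},0}(0))$. By Claim 1 of Proposition~\ref{proposition:latent_state_of_a_X0_state_is_in_X1}, applied with $\mathcal{X}_{1} = \mathcal{I}_{\mathsf{d},1}$, we get $\vec{x}_{\mathsf{d},1}(0) \in \mathcal{I}_{\mathsf{d},1}$.

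\emph{Step 2: use the FIS property in the latent space.} Since $\mathcal{I}_{\mathsf{d},1}$ is a FIS for \eqref{eqn:x_k+1_latent_state_space}, the latent successor $\vec{x}_{\mathsf{d},1}(1) = f_{\mathsf{cld},1}(\vec{x}_{\mathsf{d},1}(0))$ lies in $\mathcal{I}_{\mathsf{d},1}$. Unfolding the definition of $f_{\mathsf{cld},1}$ and using \eqref{eqn:definition_inverse_transformations} to cancel $\tilde{\mathcal{T}}^{-1}\circ\tilde{\mathcal{T}}$ on $\vec{x}_{\mathsf{d},0}(0)$ gives
\[
\vec{x}_{\mathsf{d},1}(1) = \tilde{\mathcal{T}}\bigl(f_{\mathsf{cld},0}(\vec{x}_{\mathsf{d},0}(0))\bigr) = \tilde{\mathcal{T}}(\vec{x}_{\mathsf{d},0}(1)).
\]

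\emph{Step 3: map back.} Applying $\tilde{\mathcal{T}}^{-1}$ and again using \eqref{eqn:definition_inverse_transformations}, we obtain $\vec{x}_{\mathsf{d},0}(1) = \tilde{\mathcal{T}}^{-1}(\vec{x}_{\mathsf{d},1}(1))$ with $\vec{x}_{\mathsf{d},1}(1) \in \mathcal{I}_{\mathsf{d},1}$. By the definition \eqref{eqn:definition_transformed_set}, this means $\vec{x}_{\mathsf{d},0}(1) \in \tilde{\mathcal{T}}^{-1}(\mathcal{I}_{\mathsf{d},1}) = \mathcal{I}_{\mathsf{d},0}$. Proposition~\ref{proposition:SufficientInductiveConditionForFis} then finishes the proof.

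The only delicate point is Step 2. We must check that the latent trajectory generated by \eqref{eqn:x_k+1_latent_state_space} from $\tilde{\mathcal{T}}(\vec{x}_{\mathsf{d},0}(0))$ is exactly the image under $\tilde{\mathcal{T}}$ of the original trajectory. This is where bijectivity \eqref{eqn:definition_inverse_transformations} is essential. Once it is verified, the rest is routine bookkeeping with the set definitions.
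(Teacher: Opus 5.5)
Your proposal is correct and follows the paper's proof essentially line for line. The shared chain is: Claim 1 of Proposition~\ref{proposition:latent_state_of_a_X0_state_is_in_X1} places $\tilde{\mathcal{T}}(\vec{x}_{\mathsf{d},0}(0))$ in $\mathcal{I}_{\mathsf{d},1}$; one step of the latent FIS property follows; unfolding $f_{\mathsf{cld},1}$ and cancelling $\tilde{\mathcal{T}}^{-1}\circ\tilde{\mathcal{T}}$ gives $\vec{x}_{\mathsf{d},1}(1)=\tilde{\mathcal{T}}(\vec{x}_{\mathsf{d},0}(1))$; mapping back through \eqref{eqn:definition_transformed_set} and invoking Proposition~\ref{proposition:SufficientInductiveConditionForFis} concludes.
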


\noindent\textit{Proof}:
$\forall \vec{x}_{\mathsf{d},0}(0) \in \mathcal{I}_{\mathsf{d},0} = \tilde{\mathcal{T}}^{-1}(\mathcal{I}_{\mathsf{d},1})$, as per $S$'s  model \eqref{eqn:x_k+1_original_state_space} in the original state space $\mathbb{X}_{0}$, the next state in $\mathbb{X}_{0}$ should be
\begin{eqnarray}
\vec{x}_{\mathsf{d},0}(1) = f_{\mathsf{cld},0}(\vec{x}_{\mathsf{d},0}(0)). \label{eqn:x_d_0_1}
\end{eqnarray}
\noindent We need to prove $\vec{x}_{\mathsf{d}, 0}(1) \in \mathcal{I}_{\mathsf{d}, 0}$. To prove this, first denote 
\begin{eqnarray}
\vec{x}_{\mathsf{d},1}(0) \sdef \tilde{\mathcal{T}}(\vec{x}_{\mathsf{d},0}(0)). \label{eqn:x_d_1_0}
\end{eqnarray}
\noindent Due to Proposition~\ref{proposition:latent_state_of_a_X0_state_is_in_X1}, $\vec{x}_{\mathsf{d},1}(0) \in \mathcal{I}_{\mathsf{d}, 1}$.

As $\mathcal{I}_{\mathsf{d},1}$ is a  FIS for the  model \eqref{eqn:x_k+1_latent_state_space} of $S$ in state space $\mathbb{X}_{1}$, as per the definition of  FIS (see (par\ref{myitem:definition_discretized_fis})), we have 
\begin{equation}
\vec{x}_{\mathsf{d},1}(1) = f_{\mathsf{cld},1}(\vec{x}_{\mathsf{d},1}(0))  \in \mathcal{I}_{\mathsf{d},1}. \label{eqn:x_d_1_1_in_I_d_1}
\end{equation}

Meanwhile, due to \eqref{eqn:x_k+1_latent_state_space}, 
\begin{eqnarray}
\vec{x}_{\mathsf{d},1}(1) = f_{\mathsf{cld},1}(\vec{x}_{\mathsf{d},1}(0)) &=& \tilde{\mathcal{T}}(f_{\mathsf{cld},0}(\tilde{\mathcal{T}}^{-1}(\vec{x}_{\mathsf{d},1}(0)))), \nonumber \\
\mbox{due to \eqref{eqn:x_d_1_0}}, \quad \vec{x}_{\mathsf{d},1}(1) &=& \tilde{\mathcal{T}}(f_{\mathsf{cld},0}(\vec{x}_{\mathsf{d},0}(0))), \nonumber \\
\mbox{due to \eqref{eqn:x_d_0_1}}, \quad \vec{x}_{\mathsf{d},1}(1) &=& \tilde{\mathcal{T}}(\vec{x}_{\mathsf{d},0}(1)), \nonumber \\
\mbox{therefore} \quad \vec{x}_{\mathsf{d},0}(1) &=& \tilde{\mathcal{T}}^{-1}(\vec{x}_{\mathsf{d},1}(1)). \label{eqn:x_d_0_1_T_-1_x_d_1_1} \\
\eqref{eqn:x_d_1_1_in_I_d_1}, \eqref{eqn:x_d_0_1_T_-1_x_d_1_1}, \eqref{eqn:definition_transformed_set} \Rightarrow \vec{x}_{\mathsf{d},0}(1) &\in& \tilde{\mathcal{T}}^{-1}(\mathcal{I}_{\mathsf{d},1}) = \mathcal{I}_{\mathsf{d},0}. \label{eqn:x_d_0_1_in_I_d_0}
\end{eqnarray}

Therefore, in the state space $\mathbb{X}_{0}$, \eqref{eqn:x_d_0_1_in_I_d_0} proves that  $\forall \vec{x}_{\mathsf{d},0}(0) \in \mathcal{I}_{\mathsf{d},0}$, we have $\vec{x}_{\mathsf{d},0}(1) \in \mathcal{I}_{\mathsf{d},0}$. Due to Proposition~\ref{proposition:SufficientInductiveConditionForFis}, $\mathcal{I}_{\mathsf{d},0}$ hence is a  FIS for the  model \eqref{eqn:x_k+1_original_state_space} of $S$ in $\mathbb{X}_{0}$. \phantom{x} $\hfill\blacksquare$

\begin{theorem}
\label{theorem:HowToProveFIS}
Given an AI-controlled system $S$ of  model~\eqref{eqn:x_k+1_latent_state_space} in $\mathbb{X}_{1}$, whose  closed-loop dynamics is $f_{\mathsf{cld}, 1}$. 
Given a hyper-rectangular set $\mathcal{I}_{\mathsf{d}, 1} \subseteq \mathbb{X}_{1}$, such that $\alpha$\_$\beta$\_$\mathsf{CROWN}$($f$ = $f_{\mathsf{cld},1}$, $\mathcal{X}$ = $\mathcal{I}_{\mathsf{d}, 1}$, $\mathcal{B}$ = $\mathcal{I}_{\mathsf{d}, 1}$, $T_{\mathsf{max}}$) (see (par\ref{myitem:alpha_beta_CROWN_function_notation})) returns true. Then
\textbf{Claim 1}) $\mathcal{I}_{\mathsf{d},1}$ is an analytically proven  hyper-rectangular FIS for the  model \eqref{eqn:x_k+1_latent_state_space} of $S$ in $\mathbb{X}_{1}$; \textbf{Claim 2}) $\mathcal{I}_{\mathsf{d},0} = \tilde{\mathcal{T}}^{-1}(\mathcal{I}_{\mathsf{d},1})$ is an analytically proven  FIS for the  model \eqref{eqn:x_k+1_original_state_space} of $S$ in $\mathbb{X}_{0}$. $\hfill\square$
\end{theorem}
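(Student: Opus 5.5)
The plan is to chain the guarantee of $\alpha,\beta$-CROWN with Proposition~\ref{proposition:SufficientInductiveConditionForFis} and Lemma~\ref{lemma:SufficientConditionForFISinOriginalStateSpace}. First I will check that the call is well-posed. The input domain $\mathcal{X} = \mathcal{I}_{\mathsf{d},1}$ is hyper-rectangular, as \eqref{eqn:hyper-rectangle} requires. The output set $\mathcal{B} = \mathcal{I}_{\mathsf{d},1}$ is a polytope of the form \eqref{eqn:polytope}: take $C_{\mathcal{B}} = \left[\begin{smallmatrix} I_n \\ -I_n \end{smallmatrix}\right]$ and $\vec{d}_{\mathcal{B}} = \left[\begin{smallmatrix} \overline{\vec{x}} \\ -\underline{\vec{x}} \end{smallmatrix}\right]$. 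Finally, $n_0 = n_1 = n$, since $f_{\mathsf{cld},1} = \tilde{\mathcal{T}} \circ f_{\mathsf{cld},0} \circ \tilde{\mathcal{T}}^{-1}$ maps $\mathbb{X}_1$ to $\mathbb{X}_1$.

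For Claim 1, I will invoke \eqref{eqn:true_result}. Since the tool returns true, it is analytically proven that $\forall \vec{x} \in \mathcal{I}_{\mathsf{d},1}$, $f_{\mathsf{cld},1}(\vec{x}) \in \mathcal{I}_{\mathsf{d},1}$. Now take any $\vec{x}_{\mathsf{d},1}(0) \in \mathcal{I}_{\mathsf{d},1}$. By \eqref{eqn:x_k+1_latent_state_space}, $\vec{x}_{\mathsf{d},1}(1) = f_{\mathsf{cld},1}(\vec{x}_{\mathsf{d},1}(0)) \in \mathcal{I}_{\mathsf{d},1}$. Proposition~\ref{proposition:SufficientInductiveConditionForFis}, applied to the latent model (which has the same one-step form as \eqref{eqn:x_k+1}), then yields that $\mathcal{I}_{\mathsf{d},1}$ is a FIS. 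It is hyper-rectangular by hypothesis. It is analytically proven because the only nontrivial step is the $\alpha,\beta$-CROWN certificate, and the induction is elementary.

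For Claim 2, I will apply Lemma~\ref{lemma:SufficientConditionForFISinOriginalStateSpace} directly to the FIS from Claim 1. This gives that $\tilde{\mathcal{T}}^{-1}(\mathcal{I}_{\mathsf{d},1})$ is a FIS for \eqref{eqn:x_k+1_original_state_space}. It is analytically proven because the lemma's argument uses only the exact bijectivity relations \eqref{eqn:definition_inverse_transformations} together with Claim 1.

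The main point needing care is not computational. It is making sure the function handed to $\alpha,\beta$-CROWN really is $f_{\mathsf{cld},1}$ as defined in \eqref{eqn:x_k+1_latent_state_space}, i.e., that the exact inverse $\tilde{\mathcal{T}}^{-1}$ is used, so that the certificate transfers through the lemma. I would state this as a standing assumption: the NINN is exactly invertible by construction, as asserted in Section~\ref{subsect:InvertibleNnInnAndNormalizedInnNinn}.
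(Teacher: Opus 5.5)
Your proposal is correct and follows the paper's proof: Claim 1 comes from \eqref{eqn:true_result} together with Proposition~\ref{proposition:SufficientInductiveConditionForFis}, and Claim 2 comes from Claim 1 together with Lemma~\ref{lemma:SufficientConditionForFISinOriginalStateSpace}. Your extra checks are sound additions that the paper leaves implicit: the explicit polytope form $C_{\mathcal{B}} = \left[\begin{smallmatrix} I_n \\ -I_n \end{smallmatrix}\right]$ for $\mathcal{I}_{\mathsf{d},1}$, and the standing assumption that $\tilde{\mathcal{T}}^{-1}$ is the exact inverse.
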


\noindent\textit{Proof}: 
As per \eqref{eqn:true_result} and Proposition~\ref{proposition:SufficientInductiveConditionForFis}, Claim~1 sustains. 

Based on Claim~1 and Lemma~\ref{lemma:SufficientConditionForFISinOriginalStateSpace}, $\mathcal{I}_{\mathsf{d},0} = \tilde{\mathcal{T}}^{-1}(\mathcal{I}_{\mathsf{d},1})$ is hence a  FIS for the  model \eqref{eqn:x_k+1_original_state_space} of $S$ in $\mathbb{X}_{0}$. \phantom{xxx} $\hfill\blacksquare$

\vspace{1ex}

With Theorem~\ref{theorem:HowToProveFIS}, we propose the following \emph{Overall Procedure} to realize the heuristics of (par\ref{myitem:definition_regular}), given the  model \eqref{eqn:x_k+1_original_state_space} of $S$ in the original $n$-dimensional state space $\mathbb{X}_{0}$. \phantom{x} $\hfill$\myitem\label{myitem:overall_procedure}

\vspace{1ex}

\setcounter{ctr}{0}

\noindent \textbf{Step \stp{step:choose_candidate_fis_in_latent_state_space}} In another (aka latent) $n$-dimensional state space $\mathbb{X}_{1}$, we forcefully choose a hyper-rectangular FIS candidate $\mathcal{I}_{\mathsf{d},1}$.

\noindent \textbf{Step \stp{step:train_inn}} Train a NINN $\tilde{\mathcal{N}} = (\tilde{\mathcal{T}}, \tilde{\mathcal{T}}^{-1})$, to transform between $\mathbb{X}_{0}$ and $\mathbb{X}_{1}$ (see \eqref{eqn:definition_inverse_transformations} for what this means). Note $\tilde{\mathcal{T}}$ and $\tilde{\mathcal{T}}^{-1}$ define the details of \eqref{eqn:x_k+1_latent_state_space}, i.e. the  model of $S$ in $\mathbb{X}_{1}$. In each \emph{training epoch} (simplified as ``\emph{epoch}'' in the following), the loss function and the sampled training set from $\mathcal{I}_{\mathsf{d},1}$ shall be designed to adjust $\tilde{\mathcal{T}}$ and $\tilde{\mathcal{T}}^{-1}$ (hence \eqref{eqn:x_k+1_latent_state_space}), so that $\mathcal{I}_{\mathsf{d},1}$ looks more like a  FIS for \eqref{eqn:x_k+1_latent_state_space}. 
Also note as per Section~\ref{subsect:InvertibleNnInnAndNormalizedInnNinn}, to train a NINN, we need to set the normalization configuration parameters $\vec{\mu}$ and $\vec{\sigma}$ a priori. The settings of $\vec{\mu}$ and $\vec{\sigma}$ are application dependent. Intuitively, they should respectively reflect the center and the dimension-wise radii of a preferred  FIS in $\mathbb{X}_{0}$. For example, if the safe set $\bar{\mathcal{F}}$ (see (par\ref{myitem:definition_set_of_concerned_unsafe_states})) in $\mathbb{X}_{0}$ is a hyper-rectangle, then we can set $\vec{\mu}$ to the center of this hyper-rectangle, and set every component of $\vec{\sigma}$ to the corresponding half-edge-length of this hyper-rectangle.

\noindent \textbf{Step \stp{step:check_if_candidate_fis_is_fis}} With the $\tilde{\mathcal{T}}$ and $\tilde{\mathcal{T}}^{-1}$ trained from Step~\ref{step:train_inn} defining  \eqref{eqn:x_k+1_latent_state_space}, use $\alpha,\beta$-CROWN to analytically verify if $\mathcal{I}_{\mathbf{d},1}$ is actually a hyper-rectangular FIS for \eqref{eqn:x_k+1_latent_state_space} in $\mathbb{X}_{1}$. Specifically, verify the result of $\alpha$\_$\beta$\_$\mathsf{CROWN}$($f$ = $f_{\mathsf{cld},1}$, $\mathcal{X}$ = $\mathcal{I}_{\mathsf{d},1}$, $\mathcal{B}$ = $\mathcal{I}_{\mathsf{d},1}$, $T_{\mathsf{max}}$) (see (par\ref{myitem:alpha_beta_CROWN_function_notation})). If the  result is true, then as per Theorem~\ref{theorem:HowToProveFIS} Claim 1, $\mathcal{I}_{\mathsf{d},1}$ is an \emph{analytically proven} hyper-rectangular FIS for the  model \eqref{eqn:x_k+1_latent_state_space} of $S$ in $\mathbb{X}_{1}$, and $\mathcal{I}_{\mathsf{d},0} = \tilde{\mathcal{T}}^{-1}(\mathcal{I}_{\mathsf{d},1})$ is an \emph{analytically proven}  FIS for the  model \eqref{eqn:x_k+1_original_state_space} of $S$ in $\mathbb{X}_{0}$. Return ``success.''  Otherwise (i.e. if the verification result is false or unknown), return ``failure.'' 

\vspace{1ex}

Several details of the Overall Procedure are further clarified by the following subsections.

%-----------------------------------------
\subsection{About Overall Procedure Step~\ref{step:choose_candidate_fis_in_latent_state_space}: How to Choose $\mathcal{I}_{\mathsf{d},1}$}
\label{subsect:OverallProcedureHowToChooseCandiddateFis}

In the latent state space $\mathbb{X}_{1}$, define the  \emph{centered hypercube with radius $R$} (where $R \in \mathbb{R}_{>0})$, denoted as $\mathcal{C}(R)$, as
\begin{eqnarray}
\mathcal{C}(R) \sdef \{\vec{x}_{1} \mid \vec{x}_{1} \in \mathbb{X}_{1}, \mbox{ and } \|\vec{x}_{1}\|_{\infty}\leqslant R\}, \label{eqn:centered_hypercube}
\end{eqnarray}
\noindent where $\| \vec{\bullet} \|_{\infty}$ is the $\infty$-norm (i.e. maximum absolute value of vector $\vec{\bullet}$'s components), and $R$ is the so-called \emph{hypercube radius}.
Note $\mathcal{C}(R)$ is centered upon the origin of $\mathbb{X}_{1}$; and as a hypercube, it is also a hyper-rectangle. 

We set $\mathcal{I}_{\mathsf{d},1}$, the hyper-rectangular FIS candidate in $\mathbb{X}_{1}$, as  
\begin{eqnarray}
\mathcal{I}_{\mathsf{d},1}
\sdef \mathcal{C}(R_{\mathsf{fis}}), 
\label{eqn:intended_latent_fis_box}
\end{eqnarray}
\noindent where $R_{\mathsf{fis}} \in \mathbb{R}_{>0}$ is the pre-configured hypercube radius constant.

\subsection{About Overall Procedure Step~\ref{step:train_inn}: How to Train the NINN}
\label{subsect:TrainingNinn}

As per heuristics (par\ref{myitem:heuristics_fix_fis_adjust_inn})(par\ref{myitem:heuristics_fix_fis_adjust_ninn}), we shall fix the hyper-rectangular FIS candidate $\mathcal{I}_{\mathsf{d},1}$ in $\mathbb{X}_{1}$, whereas properly train the NINN $\tilde{\mathcal{N}} = (\tilde{\mathcal{T}},\tilde{\mathcal{T}}^{-1})$ (hence adjust \eqref{eqn:x_k+1_latent_state_space}), so that $\mathcal{I}_{\mathsf{d},1}$ looks more like an actual  FIS for \eqref{eqn:x_k+1_latent_state_space}. 

To properly train $\tilde{\mathcal{N}}$, we need to punish deviations from  $\mathcal{I}_{\mathsf{d},1}$. Hence we need to first formally define ``deviation.'' Specifically, given a pre-configured \emph{trajectory horizon} constant $K \in \mathbb{Z}_{>0}$, $\forall \vec{x}_{\mathsf{d}, 1}(0) \in \mathbb{X}_{1}$, define $\vec{x}_{\mathsf{d}, 1}(0)$'s  \emph{$K$-step maximum deviation from $\mathcal{C}(R)$} (simplified as the  ``\emph{deviation}'' when the context is unambiguous) as 
\begin{eqnarray}
&& d^{*}(\vec{x}_{\mathsf{d}, 1}(0), K, \mathcal{C}(R)) \sdef  d^{*}(\vec{x}_{\mathsf{d}, 1}(0), K, R) \nonumber \\
&\sdef&
\smax\limits_{k = 1, \dots, K} \{(\|\vec{x}_{\mathsf{d}, 1}(k)\|_{\infty} - R)_{+}
\},
\label{eqn:K_step_maximum_deviation_from_mathcal_C_R}
\end{eqnarray}
where $\mathcal{C}(R)$ is a centered hypercube defined by \eqref{eqn:centered_hypercube}, 
$\vec{x}_{\mathsf{d}, 1}(k)$ (where $k = 1$, $\dots$, $K$) are defined by \eqref{eqn:x_k+1_latent_state_space}, and $(\bullet)_{+} \sdef \smax\{0, \bullet\}$. 
\begin{comment}
Correspondingly, we denote the $k$ value that maximizes $(\|\vec{x}_{\mathsf{d}, 1}(k)\|_{\infty} - R)_{+}$ in the right hand side of \eqref{eqn:K_step_maximum_deviation_from_mathcal_C_R} as $k^{*}(\vec{x}_{\mathsf{d}, 1}(0), K, R)$.
\end{comment}

Given $\mathcal{I}_{\mathsf{d}, 1} = \mathcal{C}(R_{\mathsf{fis}})$ (see \eqref{eqn:intended_latent_fis_box}), and given 
\begin{eqnarray}
R_{\mathsf{shr}} &\sdef& \rho_{\mathsf{shr}} R_{\mathsf{fis}},  \label{eqn:R_shr}
\end{eqnarray}
\noindent where $\rho_{\mathsf{shr}} \in (0, 1]$ is the pre-configured \emph{shrinking ratio} constant,  
if
\begin{eqnarray}
\forall \vec{x}_{\mathsf{d}, 1}(0) \in \mathcal{I}_{\mathsf{d}, 1} = \mathcal{C}(R_{\mathsf{fis}}), \; d^{*}(\vec{x}_{\mathsf{d}, 1}(0), K, R_{\mathsf{shr}}) = 0, \label{eqn:heuristics_candidate_fis_is_actually_a_fis}
\end{eqnarray}
\noindent then $\mathcal{I}_{\mathsf{d}, 1} = \mathcal{C}(R_{\mathsf{fis}})$ looks like a  FIS. In contrast, if
\begin{eqnarray}
\exists \vec{x}_{\mathsf{d}, 1}(0) \in \mathcal{I}_{\mathsf{d}, 1} = \mathcal{C}(R_{\mathsf{fis}}), \; d^{*}(\vec{x}_{\mathsf{d}, 1}(0), K, R_{\mathsf{shr}}) > 0, \label{eqn:heuristics_candidate_fis_is_not_actually_a_fis}
\end{eqnarray}
\noindent then $\mathcal{I}_{\mathsf{d}, 1} = \mathcal{C}(R_{\mathsf{fis}})$ looks unlike a  FIS. 

Our loss function shall be designed to encourage \eqref{eqn:heuristics_candidate_fis_is_actually_a_fis} and discourage \eqref{eqn:heuristics_candidate_fis_is_not_actually_a_fis}. $\hfill$\myitem\label{myitem:heuristics_deviation_loss_function}

To gather more info for the loss function, we prefer $K = +\infty$, but thus the computing time cost will surge to infinity. We have to compromise based on the available computing resources. Empirically, we can set $K$ to $3 \sim 5$. 

Similarly, to check every vector $\vec{x}_{\mathsf{d}, 1}(0) \in \mathcal{I}_{\mathsf{d}, 1} = \mathcal{C}(R_{\mathsf{fis}})$ on \eqref{eqn:heuristics_candidate_fis_is_actually_a_fis} and \eqref{eqn:heuristics_candidate_fis_is_not_actually_a_fis} is neither feasible due to the infinite computing time cost incurred. Therefore, in each epoch, we have to wisely sample a few subsets, aka \emph{training subsets}, of vectors from $\mathcal{I}_{\mathsf{d}, 1} = \mathcal{C}(R_{\mathsf{fis}})$, to form our training set.  Fig.~\ref{fig:training_samples} illustrates these training subsets. 

\begin{figure}[t]
\centering
\includegraphics[width=0.45\linewidth]{\detokenize{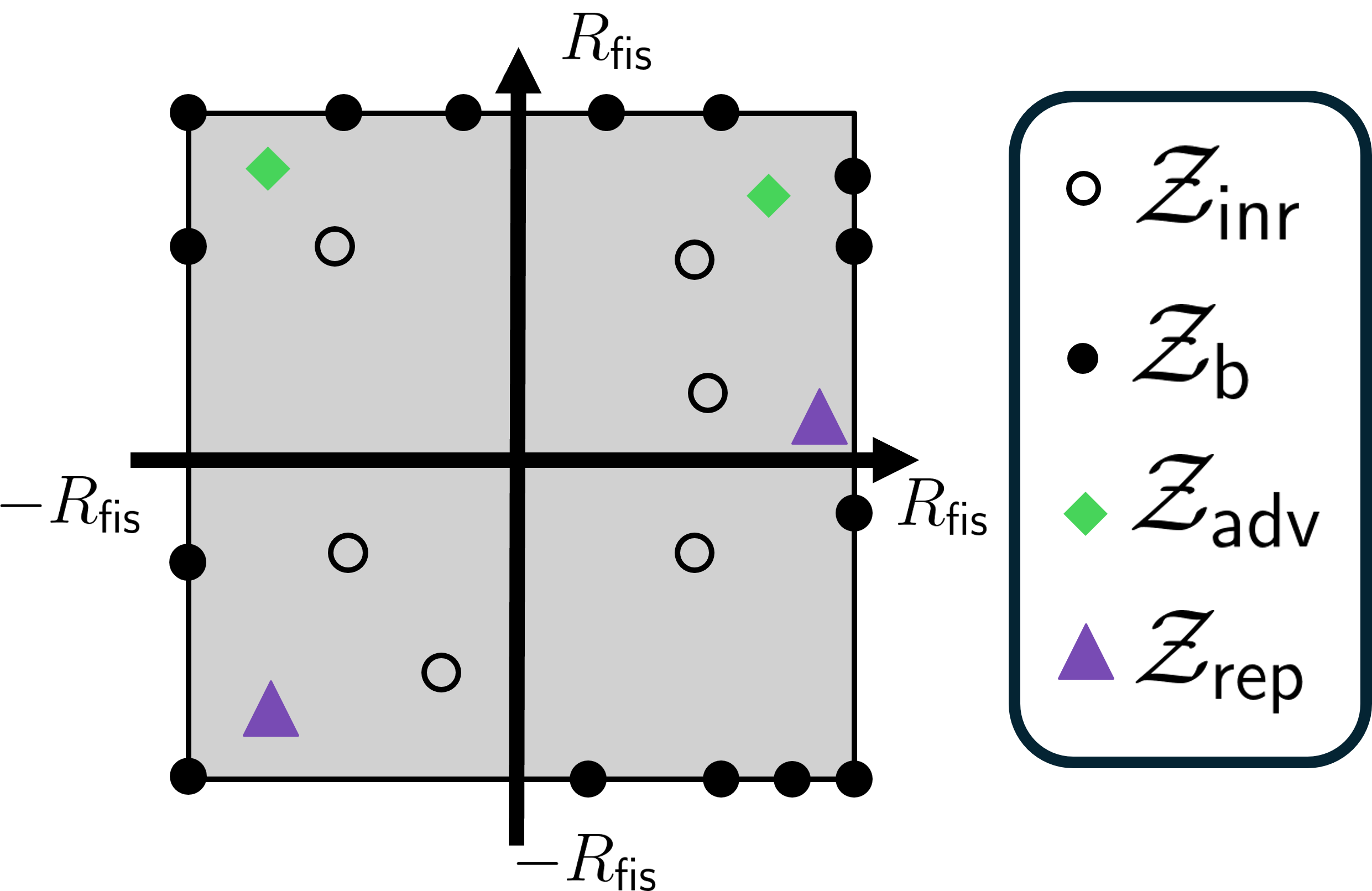}}
\caption{Training subsets: $\mathcal{Z}_{\mathsf{inr}}$, $\mathcal{Z}_{\mathsf{b}}$, $\mathcal{Z}_{\mathsf{adv}}$, and $\mathcal{Z}_{\mathsf{rep}}$.}
\label{fig:training_samples}
\end{figure}

The first training subset of vectors are sampled inside of $\mathcal{I}_{\mathsf{d}, 1} = \mathcal{C}(R_{\mathsf{fis}})$, i.e. from $\{\vec{z} \in \mathcal{I}_{\mathsf{d}, 1} = \mathcal{C}(R_{\mathsf{fis}}) \mid \|\vec{z}\|_{\infty} < R_{\mathsf{fis}}\}$. We randomly (as per uniform distribution) sample $I_{\mathsf{inr}} \in \mathbb{Z}_{>0}$ ($I_{\mathsf{inr}}$ is a pre-configured constant) such vectors. We call them \emph{interior training cases}, and respectively denote them as $\vec{z}_{\mathsf{inr}, i}$ ($i = 1$, $\dots$, $I_{\mathsf{inr}}$). We denote 
\begin{eqnarray}
\mathcal{Z}_{\mathsf{inr}} \sdef \{\vec{z}_{\mathsf{inr}, i}\}_{i=1, \dots, I_{\mathsf{inr}}}. \label{eqn:Z_i}
\end{eqnarray}

The second training subset of vectors are sampled from the boundaries of $\mathcal{I}_{\mathsf{d}, 1} = \mathcal{C}(R_{\mathsf{fis}})$, i.e. from $\{\vec{z} \in \mathcal{I}_{\mathsf{d}, 1} = \mathcal{C}(R_{\mathsf{fis}}) \mid \exists j \in \{1, \dots, n\}, \vec{z}[j] = R_{\mathsf{fis}}\}$. We randomly (as per uniform distribution) sample $I_{\mathsf{b}} \in \mathbb{Z}_{>0}$ ($I_{\mathsf{b}}$ is a pre-configured constant) such vectors. We call them \emph{boundary training cases}, and respectively denote them as $\vec{z}_{\mathsf{b}, i}$ ($i = 1$, $\dots$, $I_{\mathsf{b}}$). We denote 
\begin{eqnarray}
\mathcal{Z}_{\mathsf{b}} \sdef \{\vec{z}_{\mathsf{b}, i}\}_{i=1, \dots, I_{\mathsf{b}}}. \label{eqn:Z_b}
\end{eqnarray}

The third training subset of vectors are those found from $\mathcal{I}_{\mathsf{d}, 1} = \mathcal{C}(R_{\mathsf{fis}})$ by the well-known \emph{Projected Gradient Descent} (PGD) adversarial training/testing method ~\cite{madry2017towards}. The PGD adversarial training objective is set to find $I_{\mathsf{adv}} \in \mathbb{Z}_{>0}$ ($I_{\mathsf{adv}}$ is a pre-configured constant) vectors with maximized deviations. We call these vectors the \emph{adverse training cases}, and respectively denote them as $\vec{z}_{\mathsf{adv}, i}$ ($i = 1$, $\dots$, $I_{\mathsf{adv}}$). The specific process to find $\vec{z}_{\mathsf{adv}, i}$ is as follows. For each $i$, we randomly (as per uniform distribution) sample an initial value for $\vec{z}_{\mathsf{adv},i}$ from $\mathcal{I}_{\mathsf{d}, 1} = \mathcal{C}(R_{\mathsf{fis}})$, and then adjust $\vec{z}_{\mathsf{adv},i}$ as per PGD to maximize $d^{*}(\vec{z}_{\mathsf{adv}, i}, K, R_{\mathsf{fis}})$. We denote 
\begin{eqnarray}
\mathcal{Z}_{\mathsf{adv}} \sdef \{\vec{z}_{\mathsf{adv}, i}\}_{i=1, \dots, I_{\mathsf{adv}}}. \label{eqn:Z_adv}
\end{eqnarray}

The fourth training subset of vectors are sampled from known hard boundary training cases of previous epochs. The details are elaborated in Appendix~\ref{sect:ReplayBuffer}. Intuitively, we maintain a set $\mathcal{Q}_{\mathsf{rep}}$, aka \emph{replay buffer}, whose elements are boundary training cases with big deviations in previous epochs. At the beginning of the current epoch, we randomly sample $I_{\mathsf{rep}}' = \mathsf{min}\{ |\mathcal{Q}_{\mathsf{rep}}|, I_{\mathsf{rep}} \}$ vectors from $\mathcal{Q}_{\mathsf{rep}}$ (where $I_{\mathsf{rep}} \in \mathbb{Z}_{>0}$ is a pre-configured constant), respectively denoted as $\vec{z}_{\mathsf{rep}, i}$ ($i = 1$, $\dots$, $I_{\mathsf{rep}}'$). We denote 
\begin{eqnarray}
\mathcal{Z}_{\mathsf{rep}} \sdef \{\vec{z}_{\mathsf{rep}, i}\}_{i=1, \dots, I_{\mathsf{rep}}'}. \label{eqn:Z_rep}
\end{eqnarray}
\noindent At the end of the current epoch, we update $\mathcal{Q}_{\mathsf{rep}}$ with newly found hard boundary training cases (see Appendix~\ref{sect:ReplayBuffer} for details). $\hfill$\myitem\label{myitem:update_reply_buffer} 

$\mathcal{Z}_{\mathsf{inr}}$, $\mathcal{Z}_{\mathsf{b}}$,  $\mathcal{Z}_{\mathsf{adv}}$, and $\mathcal{Z}_{\mathsf{rep}}$ form the \emph{training set}, denoted as 
\begin{eqnarray}
\mathfrak{Z}
\sdef
\{\mathcal{Z}_{\mathsf{inr}},\mathcal{Z}_{\mathsf{b}}, \mathcal{Z}_{\mathsf{adv}},\mathcal{Z}_{\mathsf{rep}}\}. \label{eqn:per_epoch_training_set}
\end{eqnarray}

For each 
$\mathcal{Z}=\{\vec{z}_{1},\ldots,\vec{z}_{I}\} \in \mathfrak{Z}$, where $I = |\mathcal{Z}|$, 
sort $\mathcal{Z}$'s elements into $\vec{z}_{(1)},\dots,\vec{z}_{(I)}$,  so that
$d^{*}(\vec{z}_{(1)},K,R_{\mathsf{shr}})
\geqslant \dots \geqslant
d^{*}(\vec{z}_{(I)},K,R_{\mathsf{shr}})$ ($R_{\mathsf{shr}}$ is defined by  \eqref{eqn:R_shr}).
Given a pre-configured \emph{hard-case ratio} constant 
$\rho_{\mathsf{hrd}}\in(0,1]$, we define
\begin{eqnarray}
\mathcal{H}(\mathcal{Z}, \rho_{\mathsf{hrd}})
\sdef
\{\vec{z}_{(j)} \mid
1 \leqslant j \leqslant
\lceil \rho_{\mathsf{hrd}} I \rceil \}. \label{eqn:hard_case}
\end{eqnarray}
We then define the \emph{deviation loss} to be 
\begin{eqnarray}
L_{\mathsf{dev}}
\sdef
\sum_{\mathcal{Z} \in \mathfrak{Z}}
w_{\mathcal{Z}}
\left(
\frac{\sum\limits_{\vec{z}\in\mathcal{H}(\mathcal{Z}, \rho_{\mathsf{hrd}})}
\nu^{2}(\vec{z})}{|\mathcal{H}(\mathcal{Z}, \rho_{\mathsf{hrd}})|}
+
\frac{\lambda}{|\mathcal{Z}|}
\sum_{\vec{z}\in\mathcal{Z}}
\nu^{2}(\vec{z})
\right), 
\label{eqn:deviation_loss} \\
\mbox{where } \qquad\qquad
\nu(\vec{z}) \sdef d^{*}(\vec{z}, K, R_{\mathsf{shr}}), \qquad\qquad\qquad \label{eqn:v_z}
\end{eqnarray}
\noindent $w_{\mathcal{Z}}$ is the pre-configured loss weight constant assigned to the training subset $\mathcal{Z}$ ($\mathcal{Z} = \mathcal{Z}_{\mathsf{inr}}$, $\mathcal{Z}_{\mathsf{b}}$, $\mathcal{Z}_{\mathsf{adv}}$, or $\mathcal{Z}_{\mathsf{rep}}$),  
and $\lambda$ is another pre-configured loss weight constant. Empirically, we set $w_{\mathcal{Z}_{\mathsf{inr}}} = 30$, $w_{\mathcal{Z}_{\mathsf{b}}} = 120$, $w_{\mathcal{Z}_{\mathsf{adv}}} = 80$,  $w_{\mathcal{Z}_{\mathsf{rep}}} = 220$, and $\lambda = 0.15$,
The use of $\nu(\vec{z})$ (see \eqref{eqn:v_z}) in \eqref{eqn:deviation_loss} embodies the heuristics of (par\ref{myitem:heuristics_deviation_loss_function}).

Besides the deviation loss, Overall Procedure Step~\ref{step:train_inn} also suggests setting the normalization configuration parameters $\vec{\mu}$ to reflect the center of a preferred  FIS in $\mathbb{X}_{0}$. When the  FIS candidate in $\mathbb{X}_{1}$ is $\mathcal{I}_{\mathsf{d}, 1} = \mathcal{C}(R_{\mathsf{fis}})$, the corresponding  FIS candidate in $\mathbb{X}_{0}$ is $\mathcal{I}_{\mathsf{d}, 0} = \tilde{\mathcal{T}}^{-1}(\mathcal{I}_{\mathsf{d}, 1})$, where $\tilde{\mathcal{T}}^{-1}$ is decided by the current NINN under training. Therefore, we can define a \emph{centering loss} $L_{\mathsf{cen}}$ to encourage the NINN to move the center of $\mathcal{I}_{\mathsf{d}, 0}$ in $\mathbb{X}_{0}$ toward $\vec{\mu}$. Specifically, 
\begin{eqnarray}
L_{\mathsf{cen}}
\sdef 
\left\|
\tilde{\mathcal{T}}^{-1}(\vec{0})-\vec{\mu}
\right\|_{2}^{2}.
\label{eqn:center_loss}
\end{eqnarray}
\noindent where $\vec{0} \in \mathbb{R}^{n}$ is the zero vector, and $\| \vec{\bullet} \|_{2}$ is the Euclidean norm of $\vec{\bullet}$. 

Finally, we define the overall NINN training \emph{loss function} 
\begin{eqnarray}
L
\sdef
L_{\mathsf{dev}}
+
w_{\mathsf{cen}}L_{\mathsf{cen}}, 
\label{eqn:total_loss}
\end{eqnarray}
where $w_{\mathsf{cen}}$ is a pre-configured loss weight constant. Empirically, we set $w_{\mathsf{cen}} = 5$.

Based on the above, we propose the NINN Training algorithm for Overall Procedure Step~\ref{step:train_inn} as Algorithm~\ref{alg:INNTrainingForFISConstruction}. 

\begin{algorithm}[t]
\caption{NINN Training for Overall Procedure Step~\ref{step:train_inn}}
%
%\vspace{-2ex}
%
\label{alg:INNTrainingForFISConstruction}
\setcounter{ctr}{0}
\begin{tabbing}
888\=88\=88\=88\=88\=88\=88\=88\=88\=88\=88\kill
\stp{} \> \textbf{Input}: controller $\pi_{0}$, $f_{\mathsf{cld},0}$, \phantom{xxxxxxxxxxx} /* see \eqref{eqn:x_k+1_original_state_space} */ \\
       \>\> $\vec{\mu}$, $\vec{\sigma}$, \phantom{xxx} /* see \eqref{eqn:normalization} and  Overall Procedure Step~\ref{step:train_inn} */ \\
       \>\> $R_{\mathsf{fis}}$, $K$, $\rho_{\mathsf{shr}}$, $\rho_{\mathsf{hrd}}$, \phantom{xxl} /* see \eqref{eqn:intended_latent_fis_box}, \eqref{eqn:K_step_maximum_deviation_from_mathcal_C_R},  \eqref{eqn:R_shr}, \eqref{eqn:hard_case} */ \\
       \>\> $I_{\mathsf{inr}}$, $I_{\mathsf{b}}$, $I_{\mathsf{adv}}$, $I_{\mathsf{rep}}$, \phantom{xxx} /* see \eqref{eqn:Z_i}, \eqref{eqn:Z_b}, \eqref{eqn:Z_adv}, \eqref{eqn:Z_rep} */ \\
       \>\> total number of training epochs $E$, \\
       \>\>  early-stop test triggering threshold $\theta_{\mathsf{stp}}$, \\
       \>\> early-stop test set size $I_{\mathsf{stp}}$; \\
\stp{} \> \textbf{Output}: the trained NINN $(\tilde{\mathcal{T}}, \tilde{\mathcal{T}}^{-1})$; \\
\stp{} \> Initialize NINN $(\tilde{\mathcal{T}}, \tilde{\mathcal{T}}^{-1})$ with $\vec{\mu}$ and $\vec{\sigma}$; \\
\stp{} \> $\mathcal{I}_{\mathsf{d}, 1}$ := $\mathcal{C}(R_{\mathsf{fis}})$; // hyper-rectangular FIS candidate in $\mathbb{X}_{1}$.\\ 
\stp{} \> Initialize replay buffer $\mathcal{Q}_{\mathsf{rep}}$ := $\varnothing$; \\
\stp{step:init_c_cge} \> Initialize continuous good epoch count $c_{\mathsf{cge}}$ := $0$; \\
\stp{} \> \textbf{for} epoch := $1$ to $E$ \{ \phantom{xxxxxxxxxxxxxx} // epoch begins \\
\stp{} \>\> Sample $\mathcal{Z}_{\mathsf{inr}}$,   $\mathcal{Z}_{\mathsf{b}}$, $\mathcal{Z}_{\mathsf{adv}}$ from $\mathcal{I}_{\mathsf{d}, 1}$ as per \eqref{eqn:Z_i},  \eqref{eqn:Z_b}, \\
       \>\> and \eqref{eqn:Z_adv} respectively; \\
       \>\> // note \eqref{eqn:Z_i}, \eqref{eqn:Z_b}, \eqref{eqn:Z_adv} use $I_{\mathsf{inr}}$, $I_{\mathsf{b}}$, $I_{\mathsf{adv}}$. \\
\stp{} \>\> Sample $\mathcal{Z}_{\mathsf{rep}}$ from $\mathcal{Q}_{\mathsf{rep}}$ as per \eqref{eqn:Z_rep}; \phantom{l}// \eqref{eqn:Z_rep} uses $I_{\mathsf{rep}}$. \\
\stp{} \>\> // now training set $\mathfrak{Z}$ of \eqref{eqn:per_epoch_training_set} is ready. \\
\stp{} \>\> Compute $L_{\mathsf{dev}}$ as per \eqref{eqn:deviation_loss}; \\
       \>\> // \eqref{eqn:deviation_loss} uses $\pi_{0}$, $f_{\mathsf{cld}, 0}$, $R_{\mathsf{fis}}$, $K$, $\rho_{\mathsf{shr}}$, $\rho_{\mathsf{hrd}}$, $\tilde{\mathcal{T}}$, $\tilde{\mathcal{T}}^{-1}$ \\
       \>\> // via \eqref{eqn:x_k+1_original_state_space}, \eqref{eqn:x_k+1_latent_state_space}, \eqref{eqn:intended_latent_fis_box}, \eqref{eqn:K_step_maximum_deviation_from_mathcal_C_R}, \eqref{eqn:R_shr}, \eqref{eqn:hard_case}, \eqref{eqn:v_z}. \\
\stp{} \>\> Compute $L_{\mathsf{cen}}$ as per \eqref{eqn:center_loss}; \phantom{xxxx} // \eqref{eqn:center_loss} uses $\tilde{\mathcal{T}}^{-1}$, $\vec{\mu}$. \\
\stp{} \>\> Compute loss function $L$ as per \eqref{eqn:total_loss}; \\
       \>\> // \eqref{eqn:total_loss} uses $L_{\mathsf{dev}}$, $L_{\mathsf{cen}}$. \\
\stp{step:backpropagation} \>\> Backpropagate from $L$ to update NINN $(\tilde{\mathcal{T}}, \tilde{\mathcal{T}}^{-1})$; \\
\stp{} \>\> Update $\mathcal{Q}_{\mathsf{rep}}$ as per (par\ref{myitem:update_reply_buffer}); \\
\stp{step:early_stop_begin} \>\> // early-stop attempt begins \\
\stp{step:good_epoch_test} \>\> \textbf{if} $\forall \mathcal{Z}\in\mathfrak{Z}$, $\forall \vec{z}\in\mathcal{Z}$, $d^{*}(\vec{z},K,R_{\mathsf{fis}}) = 0$ \{ \phantom{x} // see \eqref{eqn:K_step_maximum_deviation_from_mathcal_C_R} \\
\stp{} \>\>\> $c_{\mathsf{cge}}$ := $c_{\mathsf{cge}}+1$; \\
\stp{} \>\> \} \textbf{else} \{  $c_{\mathsf{cge}}$ := $0$;  \} \\
\stp{step:early_stop_triggering_test} \>\> \textbf{if} $c_{\mathsf{cge}} \geqslant \theta_{\mathsf{stp}}$ \{ \\
\stp{} \>\>\> Sample $\mathcal{Z}_{\mathsf{stp}}$ from $\mathcal{I}_{\mathsf{d}, 1}$ as per \eqref{eqn:Z_stp};  //\eqref{eqn:Z_stp} uses $I_{\mathsf{stp}}$. \\
\stp{step:early_stop_triggering_condition} \>\>\> \textbf{if} $\forall \vec{z}\in\mathcal{Z}_{\mathsf{stp}}$, $d^{*}(\vec{z},K,R_{\mathsf{fis}}) = 0$ \{ \phantom{xxxxx} // see \eqref{eqn:K_step_maximum_deviation_from_mathcal_C_R}  \\
\stp{step:early_stop} \>\>\>\> \textbf{return} $(\tilde{\mathcal{T}}, \tilde{\mathcal{T}}^{-1})$; \phantom{xxxxxxxxxxxxxxx} // early-stop \\
\stp{} \>\>\> \} \\
\stp{step:reset_c_cge} \>\>\> $c_{\mathsf{cge}}$ := $0$; \\
\stp{step:early_stop_end} \>\> \} \phantom{xxxxxxxxxxxxxxxxxxxxx} // early-stop attempt ends. \\
\stp{} \> \} \phantom{xxxxxxxxxxxxxxxxxxxxxxxxxxxxxxxxx} // epoch ends. \\
\stp{} \> \textbf{return} $(\tilde{\mathcal{T}}, \tilde{\mathcal{T}}^{-1})$;
\end{tabbing}
\end{algorithm}

Note to further reduce training cost, Algorithm~\ref{alg:INNTrainingForFISConstruction} line~\ref{step:init_c_cge}, \ref{step:early_stop_begin} $\sim$ \ref{step:early_stop_end} attempt to stop the training early when the current NINN is already good enough. 

Specifically, Algorithm~\ref{alg:INNTrainingForFISConstruction} maintains a \emph{continuous good epoch count} $c_{\mathsf{cge}}$ throughout the training epochs (see Algorithm~\ref{alg:INNTrainingForFISConstruction} line~\ref{step:init_c_cge}, \ref{step:early_stop_begin} $\sim$ \ref{step:early_stop_end}). An epoch is considered ``good'' iff the condition of Algorithm~\ref{alg:INNTrainingForFISConstruction} line~\ref{step:good_epoch_test} holds, i.e. iff after the backpropagation adjustment of the NINN (by Algorithm~\ref{alg:INNTrainingForFISConstruction} line~\ref{step:backpropagation}), all the training cases' deviations from $\mathcal{I}_{\mathsf{d}, 1} = \mathcal{C}(R_{\mathsf{fis}})$ become $0$. 

If $c_{\mathsf{cge}}$ reaches a pre-configured constant threshold $\theta_{\mathsf{stp}}$ (see Algorithm~\ref{alg:INNTrainingForFISConstruction} line~\ref{step:early_stop_triggering_test}), then a more aggressive test for early-stop is triggered. Specifically, similar to the creation of $\mathcal{Z}_{\mathsf{adv}}$ (see \eqref{eqn:Z_adv}), we use the PGD training/testing method~\cite{madry2017towards} again, but this time for testing. We set the PGD adversarial testing objective to find $I_{\mathsf{stp}} \in \mathbb{Z}_{>0}$ ($I_{\mathsf{stp}}$ is a pre-configured constant) vectors from $\mathcal{I}_{\mathsf{d}, 1} = \mathcal{C}(R_{\mathsf{fis}})$ with maximized deviations. We denote these vectors as $\vec{z}_{\mathsf{stp}, i}$ ($i = 1$, $\dots$, $I_{\mathsf{stp}}$). The specific process to find $\vec{z}_{\mathsf{stp}, i}$ is as follows. For each $i$, we randomly (as per uniform distribution) sample an initial value for $\vec{z}_{\mathsf{stp}, i}$ from $\mathcal{I}_{\mathsf{d}, 1} = \mathcal{C}(R_{\mathsf{fis}})$, and then adjust $\vec{z}_{\mathsf{stp}, i}$ as per PGD to maximize $d^{*}(\vec{z}_{\mathsf{stp}, i}, K, R_{\mathsf{fis}})$. We denote
\begin{eqnarray}
\mathcal{Z}_{\mathsf{stp}} \sdef \{\vec{z}_{\mathsf{stp}, i}\}_{i=1, \dots, I_{\mathsf{stp}}}. \label{eqn:Z_stp}
\end{eqnarray}
\noindent The early stop is triggered iff $\forall \vec{z}_{\mathsf{stp}, i}$ ($i = 1$, $\dots$, $I_{\mathsf{stp}}$), the deviation from $\mathcal{I}_{\mathsf{d}, 1} = \mathcal{C}(R_{\mathsf{fis}})$ is $0$ (see Algorithm~\ref{alg:INNTrainingForFISConstruction} line~\ref{step:early_stop_triggering_condition} and \ref{step:early_stop}). Otherwise, the attempt to early-stop fails, and $c_{\mathsf{cge}}$ is reset to $0$ (see Algorithm~\ref{alg:INNTrainingForFISConstruction} line~\ref{step:reset_c_cge}).

%=============================
\section{Evaluation}
\label{sect:Evaluation}

%-----------------------------
\subsection{Evaluation Testbed Set Up}
\label{subsect:EvaluationSetUp}

We carry out evaluations on three classic nonlinear control applications: pendulum, cart pole, and \emph{Planar Vertical Take-Off and Landing} (PVTOL) of aerial vehicles. 

The  plant model implementations (see $f_{\mathsf{pltd}}$ of \eqref{eqn:x_k+1}) of the pendulum and cart pole are transplanted from Gymnasium~\cite{towers2026nips}, a well established repository of plant models (control testbeds) for training/testing AI-controllers. The  plant model implementation of the PVTOL is transplanted from Yang et al.~\cite{yang2024icml}.

As per (par\ref{myitem:controller_training_uses_discretized_plant_model}), to train AI-controllers, we connect the  plant model implementations with classic reinforcement learning algorithm implementations from the well-known Stable Baselines3 (SB3)~\cite{stable-baselines3} repository. In this way, we trained 20, 20, and 5 AI-controllers respectively for the pendulum, cart pole, and PVTOL control applications. Each AI-controller is an NN, whose scale is summarized by $n$, $m$, and $\eta$, respectively the dimension of the state space (i.e. NN's input size), the dimension of the actuation space (i.e. the NN's output size), and the number of neurons inside the NN. Table~\ref{tab:summary_of_ai_controllers} summarizes these NNs. $\hfill$\myitem\label{myitem:ai_controllers_for_evaluation}

\vspace{0.5ex}
\begin{table}[t]
\caption{Summary of AI-controllers}
\vspace{-3ex}
\label{tab:summary_of_ai_controllers}
\begin{center}
\begin{tabular}{l|c|c|c|p{10ex}|p{18ex}}
\hline%
Application & $n$ & $m$ & $\eta$ & activation function & Number of AI-Controllers Trained \\
\hline
Pendulum & 2 & 1 & 128 & \multirow{3}{*}{SB3 default}& 20 \\
Cart pole & 4 & 1 & 128 & & 20 \\
PVTOL & 6 & 2 & 24 & & 5 \\
\hline
\end{tabular} 
\end{center}
%
%\vspace{0.5ex}
%
$n$: state space dimension, i.e. an AI-controller's NN input size; \\
$m$: actuation space dimension, i.e. an AI-controller's NN output size; \\
$\eta$: number of neurons inside of an AI-controller's NN.
\end{table} 
\vspace{0.5ex}

We then connect the trained AI-controllers with the plant model implementations to generate 20 pendulum, 20 cart pole, and 5 PVTOL AI-controlled systems. For each AI-controlled system, we carry out our proposed Overall Procedure (see (par\ref{myitem:overall_procedure})) to find the analytically proven FIS. The configuration parameters (see \eqref{eqn:normalization}, Algorithm~\ref{alg:INNTrainingForFISConstruction}, and Algorithm~\ref{alg:ReplayBufferMaintenance}) are set as per Table~\ref{tab:overall_procedure_configuration_parameters}. 
Besides, we set $T_{\mathsf{max}}$ (see (par\ref{myitem:alpha_beta_CROWN_function_notation})) big enough, so that the calls to $\alpha\_\beta\_\mathsf{CROWN}$ are never terminated due to reaching $T_{\mathsf{max}}$. $\hfill$\myitem\label{myitem:ai_controlled_systems_for_evaluation}

\vspace{0.5ex}
\begin{table}[t]
\centering
\caption{Configuration parameter settings}
\vspace{-1.5ex}
\label{tab:overall_procedure_configuration_parameters}
%
%\resizebox{\linewidth}{!}{
%
\begin{tabular}{l|c|c}
\hline
Application & $\vec{\mu}$ & $\vec{\sigma}$ \\
\hline 
Pendulum & (0, 0)$^{\mathsf{T}}$ & (0.3, 1.5)$^{\mathsf{T}}$ \\
Cart pole & (0, 0, 0, 0)$^{\mathsf{T}}$ & (0.35, 1.0, 0.1, 1.6)$^{\mathsf{T}}$ \\
PVTOL & (0, 0, 0, 0, 0, 0)$^{\mathsf{T}}$ & (1.0, 1.0, 0.8, 1.0, 1.0, 1.0)$^{\mathsf{T}}$ \\
\hline
\end{tabular} \\
\vspace{0.5ex}
$\bullet^{\mathsf{T}}$: transpose of the vector/matrix represented by $\bullet$.\\
\vspace{0.5ex}
\begin{tabular}{p{9ex}|p{1ex}|p{1ex}|p{4ex}|p{3ex}|p{5ex}|p{5ex}|p{5ex}|p{5ex}}
\hline
$\!\!$Application & $\!\!\!R_{\mathsf{fis}}$ & $\!\!K$ & $\rho_{\mathsf{shr}}$ & $\rho_{\mathsf{hrd}}$ & $I_{\mathsf{inr}}$ & $I_{\mathsf{b}}$ & $I_{\mathsf{adv}}$ & $I_{\mathsf{rep}}$ \\
\hline
Pendulum & $\!\!\!$1.0 & 3 & 0.995 & 0.25 & 4096 & 4096 & 2048 & 2048 \\
Cart pole & $\!\!\!$1.0 & 3 & 0.995 & 0.25 & 4096 & 4096 & 2048 & 2048 \\
PVTOL & $\!\!\!$1.0 & 3 & 0.995 & 0.25 & 65536 & 65536 & 32768 & 32768 \\
\hline
\end{tabular} \\
\vspace{0.9ex}
\begin{tabular}{l|c|c|c|c|c|c}
\hline
Application & $E$ & $\theta_{\mathsf{stp}}$ & $I_{\mathsf{stp}}$ & $I^{\mathsf{max}}_{\mathsf{rep}}$ & $J_{\mathsf{add}}$ & $E_{\mathsf{rep}}$ \\
\hline
Pendulum & 3000 & 3 & 4096 & 16384 & 512 & 25 \\
Cart pole & 50000 & 5 & 4096 & 16384 & 512 & 25 \\
PVTOL & 100000 & 5 & 16384 & 262144 & 8192 & 25 \\
\hline
\end{tabular}
%
%} 
%
%
\end{table}
\vspace{0.5ex}

Particularly, $\vec{\mu}$ and $\vec{\sigma}$ values are set by domain experts. Settings on other configuration parameters are best effort. How to optimize these settings is beyond the scope of this paper. Nonetheless, even without optimization, our proposed solution already outperforms the comparison baseline in the evaluations (see Section~\ref{subsect:EvaluationResults}).

As discussed in Section~\ref{sect:Introduction} and \ref{sect:RelatedWork} (particularly, (par\ref{myitem:why_yang_et_al_method_does_not_fit})), to our best knowledge, there is no existing work in the literature that exactly solves our focus problem. The closest state-of-the-art solution is that proposed by Yang et al.~\cite{yang2024icml}. But this solution is for jointly designing the AI-controller and the FIS. That is, as the search for FIS goes on, the AI-controller is modified. This is not our focus problem, where the AI-controller is given a priori (e.g. when the legal/regulatory authority needs to certify a vendor provided AI-controlled system). To provide a comparison baseline, we have to alter Yang et al.'s solution: ban the modifications to the AI-controller (i.e. it has to use the AI-controllers generated in (par\ref{myitem:ai_controllers_for_evaluation})) during the search for FIS. We call the altered Yang et al.'s solution as the \emph{baseline solution}. $\hfill$\myitem\label{myitem:baseline_solution}

%-----------------------------
\subsection{Evaluation Results}
\label{subsect:EvaluationResults}

We use both our proposed solution (i.e. the Overall Procedure, see (par\ref{myitem:overall_procedure})) and the baseline solution (i.e. the altered Yang et al.\cite{yang2024icml}'s solution, see (par\ref{myitem:baseline_solution})) to find  FISs for the 20 pendulum, 20 cart pole, and 5 PVTOL AI-controlled systems described in (par\ref{myitem:ai_controllers_for_evaluation}),  (par\ref{myitem:ai_controlled_systems_for_evaluation}). All experiments were performed on a hardware platform equipped with a 32-core AMD Ryzen-series CPU, 64 GB of RAM, and an NVIDIA GeForce RTX 4090 GPU with 24 GB of VRAM.

The success rates of our proposed solution and the baseline solution are compared by Table~\ref{tab:success_rate}. Based on Table~\ref{tab:success_rate}, our proposed solution found FISs for all the 20 pendulum, 20 cart pole, and 5 PVTOL AI-controlled systems. In contrast, the baseline solution cannot find any FIS for any of these AI-controlled systems.

\vspace{0.5ex}
\begin{table}[t]
\centering
\caption{Success rates of finding FIS }
\vspace{-1.5ex}
\label{tab:success_rate}
\begin{tabular}{l|c|c}
\hline
Application & Proposed Solution & Baseline Solution \\
\hline
Pendulum & 100\% (20/20) & 0\% (0/20) \\
Cart pole & 100\% (20/20) & 0\% (0/20) \\
PVTOL & 100\% (5/5) & 0\% (0/5) \\
\hline
\end{tabular}
\end{table}

\begin{figure}[t]
\centering
\includegraphics[width=0.9\linewidth]{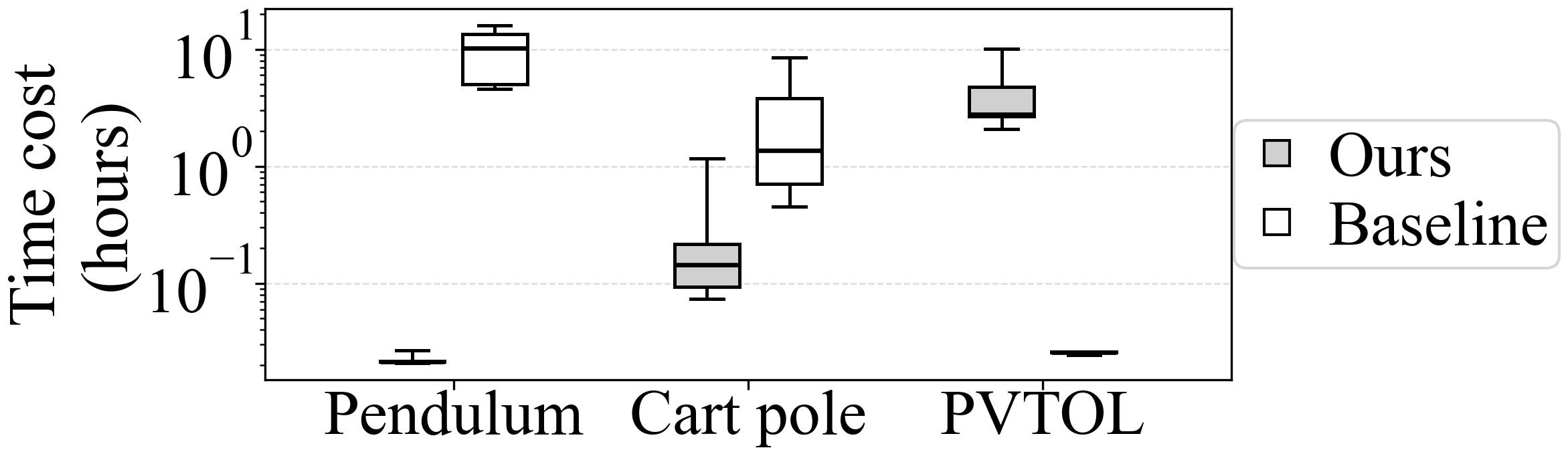}
\caption{Comparisons: time cost to run the FIS finding solution.}
\label{fig:time_costs}
\end{figure}

The time costs of our proposed solution and the baseline solution are compared by Fig.~\ref{fig:time_costs}. Here, each instance of time cost refers to the time cost to run the FIS finding solution for a given AI-controlled system. As per the figure, for pendulum and cart pole, our proposed solution's time costs are significantly smaller than the baseline solution's. While for PVTOL, the baseline solution's time costs are significantly smaller, partly due to its fast acknowledgments on failing to find the FISs (it founds 0 FIS, but our solution finds a FIS for each of the 5 PVTOL AI-controlled systems).

People are also interested in the sizes of the found FISs. As discussed in (par\ref{myitem:how_to_let_authority_trust}), the legal/regulatory authority can only trust the AI-controlled system when its initial state is within the found FIS. Therefore, the bigger the found FIS, the more freedom to choose the initial state. $\hfill$\myitem\label{myitem:why_fis_size_matters}

In the $n$-dimensional latent state space $\mathbb{X}_{1}$, the size of our found hyper-rectangular FIS $\mathcal{I}_{\mathsf{d}, 1}$ (referred to as ``$\mathbb{X}_{1}$ FIS'' in the following) is fixed to $(2R_{\mathsf{fis}})^n$. Hence the focus is the size of the FIS $\mathcal{I}_{\mathsf{d}, 0}$ in the $n$-dimensional original state space $\mathbb{X}_{0}$ (referred to as ``$\mathbb{X}_{0}$ FIS'' in the following). Table~\ref{tab:fis_size} lists the statistics on the sizes\footnote{Due to the nonlinearity of the AI-controlled system (see Fig.~\ref{fig:overall_framework}), a FIS $\mathcal{I}_{\mathsf{d},0}$ found by our solution in the original state space $\mathbb{X}_{0}$ is of highly irregular shape. To analytically calculate its size is hard. Instead, we use Monte Carlo~\cite{kalos2008monte} to estimate the size. We randomly (as per uniform distribution) sample vector $\vec{x}$ from a big enough hyper-rectangle $\mathcal{C}_{0}$ (where $\mathcal{I}_{\mathsf{d},0} \subseteq \mathcal{C}_{0}$), and measure the probability $p$ that $\vec{x} \in \mathcal{I}_{\mathsf{d},0}$ (in other words, $\tilde{\mathcal{T}}(\vec{x}) \in \mathcal{I}_{\mathsf{d},1} = \mathcal{C}(R_{\mathsf{fis}})$ in the latent state space $\mathbb{X}_{1}$). Then $\mathsf{size}(\mathcal{I}_{\mathsf{d},0})$ can be approximated by $p\,\mathsf{size}(\mathcal{C}_{0})$, where $\mathsf{size}(\bullet)$ represents the size of $\bullet$.} of the $\mathbb{X}_{0}$ FISs found by our solution. Note, the baseline solution finds no FIS at all, hence has no FIS size statistics. 
Also, as mentioned in Step~\ref{step:train_inn} of the Overall Procedure, $\vec{\mu}$ and $\vec{\sigma}$ (see Table~\ref{tab:overall_procedure_configuration_parameters} for their exact values set by domain experts) reflect the center and the dimension-wise radii of the preferred $\mathbb{X}_{0}$ FIS. Assume the preferred $\mathbb{X}_{0}$ FIS is hyper-rectangular, then its size is $\Pi_{i=1}^{n} (2\vec{\sigma}[i])$. 

Table~\ref{tab:fis_size} shows that for pendulum and cart pole, the $\mathbb{X}_{0}$ FIS sizes fluctuate from $3.7$\% to $150.4$\% of the preferred sizes. For PVTOL, the $\mathbb{X}_{0}$ FIS sizes are significantly smaller, only fluctuate from $0.016$\% to $0.025$\% of the preferred size. This could be due to the highly nonlinear nature of the PVTOL $\mathbb{X}_{0}$ FIS, as we assume the preferred $\mathbb{X}_{0}$ FIS is hyper-rectangular. 
\vspace{0.5ex}
\begin{table}[t]
\begin{center}
\caption{$\mathbb{X}_{0}$ FIS size statistics on our proposed solution}
\vspace{-1.5ex}
\label{tab:fis_size}
\begin{tabular}{p{10ex}|p{5ex}|p{7ex}|p{5ex}|p{5ex}|p{9ex}}
\hline
Application & min & median & mean & max & preferred \\
\hline 
Pendulum &0.929&1.307&1.288&1.308&1.800\\
Cart Pole &0.033&0.346&0.473&1.348&0.896\\
PVTOL &0.008&0.012&0.011&0.013&51.200\\
\hline
\end{tabular}
\end{center}
Note the baseline solution finds no FIS, hence has no FIS size statistics.
\end{table}
%
%\vspace{0.5ex}

The domain expert given preferred $\mathbb{X}_{0}$ FIS center $\vec{\mu}$ and preferred $\mathbb{X}_{0}$ FIS dimension-wise radii $\vec{\sigma}$ also motivate us to evaluate dimension-wise statistics of our found $\mathbb{X}_{0}$ FISs\footnote{Note the $\mathbb{X}_{1}$ FIS $\mathcal{I}_{\mathsf{d}, 1}$ is always a hypercube centered on the origin point of $\mathbb{X}_{1}$ with edge length $2R_{\mathsf{fis}}$.}. 
Specifically, given an $\mathbb{X}_{0}$ FIS $\mathcal{I}_{\mathsf{d},0}$, define its $i$th ($i = 1$, $\dots$, $n$) dimension upper and lower bounds respectively as
\begin{eqnarray}
\overline{\mathcal{I}}_{\mathsf{d},0}[i] \sdef \mathsf{max}_{\vec{x} \in \mathcal{I}_{\mathsf{d},0}} \{\vec{x}[i]\} \;\; \mbox{ and } \;\; 
\underline{\mathcal{I}}_{\mathsf{d},0}[i] \sdef \mathsf{min}_{\vec{x} \in \mathcal{I}_{\mathsf{d},0}} \{\vec{x}[i]\}. 
\nonumber
\end{eqnarray}
\noindent Then we define the $i$th dimension \emph{range size} of $\mathcal{I}_{\mathsf{d},0}$ as $(\overline{\mathcal{I}}_{\mathsf{d},0}[i] - \underline{\mathcal{I}}_{\mathsf{d},0}[i])$.
In contrast, we define the $i$th dimension \emph{preferred range size} as 
$2\vec{\sigma}[i]$.
Meanwhile, we define the $i$th dimension \emph{center drift} of $\mathcal{I}_{\mathsf{d},0}$ as $\left(\frac{1}{2}(\overline{\mathcal{I}}_{\mathsf{d},0}[i] + \underline{\mathcal{I}}_{\mathsf{d},0}[i]) - \vec{\mu}[i]\right)$. For an $\mathbb{X}_{0}$ FIS $\mathcal{I}_{\mathsf{d}, 0}$, its $\overline{\mathcal{I}}_{\mathsf{d}, 0}[i]$ and $\underline{\mathcal{I}}_{\mathsf{d}, 0}[i]$ can be approximated by randomly (as per uniform distribution) sampling the counterpart hyper-rectangular $\mathbb{X}_{1}$ FIS $\mathcal{I}_{\mathsf{d}, 1}$.

Table~\ref{tab:fis_dimension_wise_range_size_statistics} compares the dimension-wise range sizes of our found $\mathbb{X}_{0}$ FISs with their corresponding preferred range sizes. It shows that the range sizes on all dimensions fluctuate from $15.8$\% to $430.6$\% of the corresponding preferred range sizes. Note the baseline solution finds no FIS at all, hence has no dimension-wise range size statistics.

Table~\ref{tab:fis_dimension_wise_center_drift_statistics} lists the statistics on the dimension-wise center drifts of our found $\mathbb{X}_{0}$ FISs. To understand the relative extent of the center drifts, we can again compare them with the preferred range sizes listed in Table~\ref{tab:fis_dimension_wise_range_size_statistics}. Table~\ref{tab:fis_dimension_wise_center_drift_statistics} shows that the center drifts on all dimensions fluctuate from $-5.6$\% to $6.5$\% of the corresponding preferred range sizes (as noted in the title,  all numbers listed in Table~\ref{tab:fis_dimension_wise_center_drift_statistics} should $\times 10^{-3}$ before being used). The small center drifts may due to the use of the centering loss (see \eqref{eqn:center_loss}). They also mean that the $\mathbb{X}_{0}$ FISs' centers roughly meet the preferred centers. Note the baseline solution finds no FIS at all, hence has no dimension-wise center drift statistics.

\vspace{0.5ex}
\begin{table}[t]
\begin{center}
\caption{Dimension-wise range size statistics on $\mathbb{X}_{0}$ FISs found by our proposed solution}
\vspace{-1.5ex}
\label{tab:fis_dimension_wise_range_size_statistics}
%
%\vspace{0.5ex}
%
\begin{tabular}{p{10ex}|p{9ex}|p{4ex}|p{6ex}|p{4ex}|p{4ex}|p{10ex}}
\hline
Application & Dimension Index $i$ & min & median & mean & max & preferred range size \\
\hline 
\multirow{2}{*}{Pendulum} & 1 &0.822&0.822&0.823&0.840&0.600\\
& 2 &2.110&3.108&3.058&3.108&3.000\\
\hline
\multirow{4}{*}{Cart Pole} & 1 &1.000&1.913&2.018&3.014&0.700\\
& 2 &1.729&3.360&3.304&4.234&2.000\\
& 3 &0.381&0.632&0.626&0.847&0.200\\
& 4 &1.056&2.674&2.656&3.416&3.200\\
\hline
\multirow{6}{*}{PVTOL} & 1 &1.670&2.066&2.048&2.414&2.000\\
& 2 &2.239&2.384&2.382&2.609&2.000\\
& 3 &0.253&0.288&0.286&0.321&1.600\\
& 4 &1.886&2.284&2.269&2.693&2.000\\
& 5 &1.100&1.266&1.266&1.406&2.000\\
& 6 &0.948&1.022&1.013&1.061&2.000\\
\hline
\end{tabular} \\
\end{center}
Note the baseline solution finds no FIS, hence has no corresponding statistics.
\end{table}

%\vspace{0.5ex}
%
\begin{table}[t]
\begin{center}
\caption{Dimension-wise center drift (unit: $10^{-3}$) statistics on $\mathbb{X}_{0}$ FISs found by our proposed solution}
\vspace{-1.5ex}
\label{tab:fis_dimension_wise_center_drift_statistics}
%
%\vspace{0.5ex}
%
\begin{tabular}{p{10ex}|p{9ex}|p{7ex}|p{7ex}|p{7ex}|p{7ex}}
\hline
Application & Dimension Index $i$ & min & median & mean & max \\
\hline 
\multirow{2}{*}{Pendulum} & 1 &-6.557&-0.370&-0.679&-0.370\\
& 2 &-38.995&-2.024&-3.872&-2.024\\
\hline
\multirow{4}{*}{Cart Pole} & 1 &-29.255&1.053&-0.240&45.220\\
& 2 &-51.584&-4.826&-3.262&24.570\\
& 3 &-11.171&-0.255&-1.298&4.192\\
& 4 &-12.437&1.516&1.062&11.026\\
\hline
\multirow{6}{*}{PVTOL} & 1 &-45.552&-12.109&2.390&43.405\\
& 2 &-19.766&9.474&7.279&26.993\\
& 3 &-6.361&0.217&-2.090&0.999\\
& 4 &-46.040&14.341&7.399&70.504\\
& 5 &-65.811&-17.873&-30.718&-12.352\\
& 6 &-40.333&-15.338&-19.986&-2.152\\
\hline
\end{tabular} \\
\end{center}
Note the baseline solution finds no FIS, hence has no corresponding statistics.
\end{table}
%
%\vspace{0.5ex}

%-----------------------------------------
\subsection{Discussions on Threats to Validity}
\label{subsect:ThreatsToValidity}

\noindent \textit{1. Construct Validity Threats}

There can be other metrics to evaluate.

However, as per Section~\ref{sect:Introduction}, \ref{sect:ProblemFormulation}, particularly (par\ref{myitem:focus_of_this_paper}) and (par\ref{myitem:focus_problem}), finding FIS is the focus of this paper. Hence the success rate of finding FISs (see Table~\ref{tab:success_rate}) is undoubtedly the most relevant metric. 
Related to success rate, the computing time cost (see Fig.~\ref{fig:time_costs}) is also relevant: if the time cost is too big, the solution is impractical. 
The size of the FIS (see Table~\ref{tab:fis_size}) and the dimension-wise range size (see Table~\ref{tab:fis_dimension_wise_range_size_statistics}) are relevant because they reflect the freedom on the choice of the initial state (see (par\ref{myitem:why_fis_size_matters})). 
The dimension-wise center drift (see Table~\ref{tab:fis_dimension_wise_center_drift_statistics}) is relevant as it reflects how well the centers of the found FISs match the preferences set by domain experts. 

The above said, these metrics will be more relevant if the maximum feasible FIS size is known; or even more, if the maximum feasible FIS size for the given forbidden set $\mathcal{F}$ (see  (par\ref{myitem:definition_set_of_concerned_unsafe_states})) is known. However, how to calculate these values is a nontrivial problem way beyond the focus of this paper. It deserves many future papers. In this paper, we are laying the indispensable foundation for these future papers: to have a working solution that can find FIS in the first place.

\noindent \textit{2. Internal Validity Threats}

We carefully checked our code. 

For the transplantation of plant implementations from  Gymnasium~\cite{towers2026nips} and Yang et al.~\cite{yang2024icml}, we implemented simulation-based consistency checks to verify that, given the same input, the transplanted implementations output the same results as the original implementations. For altering Yang et al.~\cite{yang2024icml}'s solution to realize the baseline solution, we did not modify Yang et al.'s source code; instead, we only altered the configuration parameter to fix the controller. Two domain experts independently reviewed the baseline solution to check the correctness of the alteration.

Moreover, to reduce the risk of implementation bugs, we reused well-tested public code repositories as much as possible, such as SB3~\cite{stable-baselines3} and $\alpha,\beta$-CROWN~\cite{zhang2018nips}\cite{wang2021betanips}\cite{xu2020fast}\cite{xu2020nips}. These repositories have been widely used and actively maintained by the community.

Also, due to limited computing budget, we only evaluated 5 (instead of 20) PVTOL testbeds. However, the per application comparisons are already revealing important facts. 

\noindent \textit{3. External Validity Threats:}

\noindent \textit{3.1 Control Application Testbed Representativeness}

We can always expand our testbeds to cover more control applications. However, we claim the pendulum, cart pole, and PVTOL testbeds used in this paper are reasonably representative. They have long been used as classic testbeds in control-theory research/textbooks~\cite{fantoni2012non}\cite{sutton1998reinforcement}\cite{wu2023}\cite{yang2024icml}. In particular, pendulum and cart pole are among the most popular classic control testbeds in Gymnasium~\cite{towers2026nips}, while PVTOL is a classic high dimensional control testbed~\cite{yang2024icml}. Moreover, these three testbeds cover different state space dimensionalities, ranging from 2 to 6. In comparison, in Gymnasium, a well recognized control testbed repository, under the category of ``classic control,'' the testbeds' state space dimensionalities also range from 2 to 6, which matches our range of 2 to 6.

\noindent \textit{3.2 SB3 Representativeness}

SB3 is a well established open source repository for reinforcement learning algorithm implementations. It has over 13,500 stars and over 2,200 forks on GitHub~\cite{sb3github}, and over 1 million per month downloads as per the \texttt{ecosyste.ms} ecosystem list report~\cite{sb3ecosystems}.

\begin{comment}
    {\it 3.1 Human Oracle Representativeness}

To our best knowledge, Zolfagharian et al.'s STARLA~\cite{zolfagharian2023search} paper provides the most rigorous definition in the literature on human oracle for testing RL software. The use of the Gymnasium repository~\cite{towers_gymnasium_2023}\cite{Gymnasium} is also a widely-adopted practice to evaluate the RL software implementations.

\noindent {\it 3.2 RL Software Implementations' Representativeness}

We can always expand our benchmark to include more RL software implementations. However, we claim the SB3 implementations of A2C, PPO, and TD3 are reasonably representative, because 1) they are representative algorithms covering the two main categories of RL algorithms: on-policy and off-policy~\cite{sutton2018reinforcement}; 2) PPO is strongly recommended by OpenAI~\cite{latypov2023comprehensive}\cite{openai2017ppo}; and 3) SB3 is a reasonably sized, well-known and well-maintained open-source library of RL software implementations~\cite{raffin2021stable}\cite{stable-baselines3-github-url}.
\end{comment}

%=============================
\section{Conclusion}
\label{sect:Conclusion}

Our proposed solution can find analytically proven FISs for given AI-controlled systems. Our evaluations show that our proposed solution can find FISs for all 45 AI-controlled system testbeds, while the baseline solution can find none. Furthermore, our solutions's time costs are significantly smaller than those of the baseline solutions for 40 of the 45 testbeds. The centers of our found FISs also roughly meet the preferences set by domain experts. 

%=============================
%\section*{Acknowledgment}

% The research project related to this paper is supported
% in part by the Hong Kong RGC Theme-based Research Scheme T22-505/19-N (P0031331, RBCR, P0031259, RBCP);
% in part by RGC GRF 15207324 (P0051926, B-QCFM);
% in part by RGC Germany/HK Joint Research Scheme G-PolyU503/16;
% in part by HKSAR Government and HKJCCT P0041424 (ZB5A);
% and in part by the Hong Kong Polytechnic Univ.
% P0042701 (CE09),
% P0046487 (CE0F),
% P0047916 (TACW),
% P0042699 (CE55),
% P0045578 (CE1C),
% P0043884 (CD6R),
% P0047965 (TAEB),
% P0047964 (TAEA),
% P0033695 (ZVRD),
% P0013879 (BBWH),
% P0036469 (CDA8),
% P0043634 (1-TAB2),
% P0043647 (1-TABF),
% P0042721 (1-ZVG0),
% P0051205, P0051074, J2024A005,
% LTG22-25/IICA/33 (TDG 2022-25),
% and TDG22-25/SMS-11.

% Any opinions, findings, and conclusions or recommendations expressed in this publication are those of the authors and do not necessarily reflect the views of sponsors.

% The authors thank anonymous reviewers for their efforts/advice on improving this paper.

%=============================
\appendices

%=============================
\section{Replay Buffer}
\label{sect:ReplayBuffer}

The replay buffer $\mathcal{Q}_{\mathsf{rep}}$ mentioned in Section~\ref{subsect:TrainingNinn} stores at most $I^{\mathsf{max}}_{\mathsf{rep}}$ elements. 
At any time instance, denote the $j$th element in $\mathcal{Q}_{\mathsf{rep}}$ ($j = 1$, $\dots$, $|\mathcal{Q}_{\mathsf{rep}}|$) as tuple $(\vec{z}_{j}, \tilde{\nu}_{j})$, where $\vec{z}_{j} \in \mathcal{C}(R_{\mathsf{fis}})$ and  $\tilde{\nu}_{j}$ is a \emph{past deviation} recorded in one of the past epochs, detailed later.

Suppose the current epoch is the $e$th ($e \in \mathbb{Z}_{>0}$) epoch of training.

At the beginning of the current epoch, we sample $\mathcal{Z}_{\mathsf{rep}}$ (see \eqref{eqn:Z_rep}) from $\mathcal{Q}_{\mathsf{rep}}$.
Specifically, we sample $I'_{\mathsf{rep}} = \mathsf{min}\{I_{\mathsf{rep}}, |\mathcal{Q}_{\mathsf{rep}}|\}$ tuples from $\mathcal{Q}_{\mathsf{rep}}$ as per the following probabilistic distribution: for each  $(\vec{z}_{j}, \tilde{\nu}_{j}) \in \mathcal{Q}_{\mathsf{rep}}$, its probability to be sampled is
\begin{eqnarray}
\mathsf{Pr}((\vec{z}_{j}, \tilde{\nu}_{j})) =
\frac{\tilde{\nu}_{j}}
{\sum_{\ell = 1, \dots, |\mathcal{Q}_{\mathsf{rep}}|} \tilde{\nu}_{\ell}}.
\nonumber 
\end{eqnarray}
Denote the sampled $I'_{\mathsf{rep}}$ tuples as $\{(
\vec{z}_{\mathsf{rep}, i},  \tilde{\nu}_{\mathsf{rep}, i}
)\}_{i=1,\ldots,I'_{\mathsf{rep}}}$, thus $\mathcal{Z}_{\mathsf{rep}} \sdef \{\vec{z}_{\mathsf{rep},i}\}_{i=1,\ldots,I'_{\mathsf{rep}}}$, i.e. \eqref{eqn:Z_rep}. Note if $\mathcal{Q}_{\mathsf{rep}} = \varnothing$, then $I'_{\mathsf{rep}} = 0$, hence $\mathcal{Z}_{\mathsf{rep}} = \varnothing$.

At the end of the current epoch, we update $\mathcal{Q}_{\mathsf{rep}}$ as per Algorithm~\ref{alg:ReplayBufferMaintenance}. Besides the aforementioned parameters, additional input parameters to this algorithm include the following. 
$\mathcal{Z}_{\mathsf{b}}$ (see \eqref{eqn:Z_b}) is the set of boundary training cases sampled in the current epoch. 
$J_{\mathsf{add}} \in \mathbb{Z}_{>0}$ is a pre-configured constant denoting the preferred number of boundary training cases from $\mathcal{Z}_{\mathsf{b}}$ to be added to $\mathcal{Q}_{\mathsf{rep}}$. $E_{\mathsf{rep}} \in \mathbb{Z}_{>0}$ is a pre-configured constant denoting the period to overhaul $\mathcal{Q}_{\mathsf{rep}}$ (see Algorithm~\ref{alg:ReplayBufferMaintenance} line~\ref{step:overhaul_begin} $\sim$ \ref{step:overhaul_end}).

\begin{algorithm}[thbp]
\caption{Replay Buffer Update}
\label{alg:ReplayBufferMaintenance}
\vspace{-1.5ex}
\setcounter{ctr}{0}
\begin{tabbing}
888\=88\=88\=88\=88\=88\=88\=88\=88\=88\=88\kill
\stp{} \> \textbf{Input}: $\mathcal{Q}_{\mathsf{rep}}$, $I^{\mathsf{max}}_{\mathsf{rep}}$, $e$, $\mathcal{Z}_{\mathsf{b}}$, $J_{\mathsf{add}}$, $E_{\mathsf{rep}}$; \\
\stp{} \> \textbf{Output}: the updated $\mathcal{Q}_{\mathsf{rep}}$; \\
\stp{} \> Sort all elements of $\mathcal{Z}_{\mathsf{b}}$ into $\vec{z}_{\mathsf{b},(1)},\ldots,\vec{z}_{\mathsf{b},(I_{\mathsf{b}})}$ so that \\
       \> $\nu(\vec{z}_{\mathsf{b},(1)}) \geqslant \dots \geqslant \nu(\vec{z}_{\mathsf{b},(I_{\mathsf{b}})})$; \phantom{x} // $\nu(\vec{\bullet})$ is defined by  \eqref{eqn:v_z} \\
\stp{} \> $J'_{\mathsf{add}}$ := $\mathsf{min}\{J_{\mathsf{add}}, I_{\mathsf{b}}\}$; \\
\stp{} \> $\mathcal{A}_{\mathsf{rep}}$ := $\{\vec{z}_{\mathsf{b},(j)}\mid 1\leqslant j\leqslant J'_{\mathsf{add}}\}$; \\
\stp{} \> $\mathcal{Q}'_{\mathsf{rep}}$ := $\mathcal{Q}_{\mathsf{rep}} \cup \{(\vec{z}, \nu(\vec{z}))\mid \vec{z}\in\mathcal{A}_{\mathsf{rep}}\}$; \\
\stp{} \> Sort all elements of $\mathcal{Q}'_{\mathsf{rep}}$ into $(\vec{z}\,'_{1}, \tilde{\nu}'_{1})$, $\dots$, \\
       \> $(\vec{z}\,'_{|\mathcal{Q}'_{\mathsf{rep}}|}, \tilde{\nu}'_{|\mathcal{Q}'_{\mathsf{rep}}|})$ so that $\tilde{\nu}'_{1} \geqslant \dots \geqslant \tilde{\nu}'_{|\mathcal{Q}'_{\mathsf{rep}}|}$; \\
\stp{} \> $\mathcal{Q}_{\mathsf{rep}}$ := $\{(\vec{z}\,'_{j}, \tilde{\nu}'_{j})) \mid 1 \leqslant j \leqslant \mathsf{min}\{I^{\mathsf{max}}_{\mathsf{rep}},|\mathcal{Q}'_{\mathsf{rep}}|\}\}$; \\
\stp{step:overhaul_begin} \> // overhaul begins \\
\stp{} \> \textbf{if} ($|\mathcal{Q}_{\mathsf{rep}}|>0$) and ($e \bmod E_{\mathsf{rep}}=0$) \{ \\
\stp{} \>\> \textbf{for} $j$ := $1$ to $|\mathcal{Q}_{\mathsf{rep}}|$ \{ \\
\stp{} \>\>\> Update the $j$th element $(\vec{z}_{j}, \tilde{\nu}_{j})$ of $\mathcal{Q}_{\mathsf{rep}}$ using  \\
       \>\>\> the current NINN, i.e. set $\tilde{\nu}_{j}$ := $\nu(\vec{z}_{j})$; \\
\stp{} \>\> \} \\
\stp{step:overhaul_end} \> \} \phantom{xxxxxxxxxxxxxxxxxxxxxxxxxxxxxx} // overhaul ends.
\vspace{-2ex}
\end{tabbing}
\end{algorithm}

%=============================

\bibliographystyle{IEEEtran}
\bibliography{refs}

\end{document}